\begin{document}
\title{G$^2$N$^2$ : Weisfeiler and Lehman go grammatical}
\date{}
\author[1]{Jason Piquenot}
\author[1]{Aldo Moscatelli}
\author[1]{Maxime B\'erar}
\author[1]{Pierre H\'eroux}
\author[2]{Jean-Yves Ramel}
\author[2]{Romain Raveaux}
\author[1]{S\'ebastien Adam}
\affil[1]{LITIS Lab, University of Rouen Normandy, France}
\affil[2]{LIFAT Lab, University of Tours, France}
\maketitle

\begin{abstract}

This paper introduces a framework for formally establishing a connection between a portion of an algebraic language and a Graph Neural Network (GNN). The framework leverages Context-Free Grammars (CFG) to organize algebraic operations into generative rules that can be translated into a GNN layer model. As CFGs derived directly from a language tend to contain redundancies in their rules and variables, we present a grammar reduction scheme. By applying this strategy, we define a CFG that conforms to the third-order Weisfeiler-Lehman (3-WL) test using MATLANG. From this 3-WL CFG, we derive a GNN model, named G$^2$N$^2$, which is provably 3-WL compliant. Through various experiments, we demonstrate the superior efficiency of G$^2$N$^2$ compared to other 3-WL GNNs across numerous downstream tasks. Specifically, one experiment highlights the benefits of grammar reduction within our framework.
\end{abstract}

\section{Introduction}\label{sec: 1}


In the last few years, the Weisfeiler-Lehman (WL) hierarchy, based on the eponymous polynomial-time isomorphism test (\cite{wesfleman}), has been the most common way to characterise the expressive power of Graph Neural Networks (GNNs) (\cite{morris2019wl,bodnar2021weisfeiler,bodnar2021weisfeiler2,Zhang2023ACE}). A founding result was the proof that Message Passing Neural Networks (MPNNs) (\cite{gilmer2017neural,wu2020comprehensive}) are at most as powerful as the first-order WL test (1-WL) (\cite{morris2019wl,xu2018powerful}). As a consequence of this result, many subsequent contributions have focused on going beyond this $\WL 1$ limit, to reach more expressive GNNs. For instance, subgraph-based GNNs  (\cite{chen2020can,zhang2021nested,zhao2021stars}) succeed to surpass $\WL 1$ expressive power but are still bounded by $\WL 3$ (\cite{frasca2022understanding}). 


One way to ensure $\WL k$ expressive power is to mimic one iteration of the $\WL k$ test (\cite{maron2019provably}) for each GNN layer.
Taking as root the colouring and hashing steps of the $\WL k$ algorithm, \cite{maron2019provably} shows that $k$-IGN, based on the basis of equivariant operators defined for IGN (\cite{maron2018invariant}), is as powerful as the $\WL k$ test. Since $k$-IGN works on $k$-th order tensors and since the cardinal of the basis is equal to the $2k$-th Bell number, it is limited in practice by both the layer input memory consumption and the cardinal of IGN operator basis, even for $k=3$ (\cite{NEURIPS2020_2f73168b}). Concurrently, Provably Powerful Graph Network (PPGN) was also proposed in \cite{maron2019provably}. It is able to mimic the second-order Folklore WL test ($\FWL 2$\footnote{known to be equivalent to $\WL 3$ test (\cite{huang2021short})}) colouring and hashing steps with MLPs that are coupled together with matrix multiplication. Since PPGN only relies on matrices, it is a more tractable $\WL 3$ architecture than 3-IGN (\cite{zhang2023expressive}).

Taking an algebraic point of view, the groundbreaking paper \cite{Geerts} reformulates the $\WL 1$ and $\WL 3$ tests as languages based on specific subsets of algebraic operations applied on the adjacency matrix. These fragments of the matrix language MATLANG (\cite{matlang}) called $\ML{\cur L_1}$ and $\ML{\cur L_3}$ are shown to be as expressive as $\WL 1$ and $\WL 3$ (\cite{Geerts}). Derived from this result, a model called GNNML1 was proposed in \cite{balcilarexpresspower}. GNNML1 is proven to be $\WL 1$ equivalent since it is able to generate any sentence of $\ML{\cur L_1}$. A more expressive model called GNNML3 was proposed in the same paper. It is only shown to be more expressive than $\WL 1$. This is due to the lack of a systematic procedure of deriving a GNN model from a given language fragment.



In this paper, we leverage this bottleneck by proposing a generic methodology to produce a GNN from any fragment of an algebraic language, opening a new way to ensure expressiveness. The rationale behind our framework is to instantiate a language fragment by a reduced set of generative rules, translated into layer components of a GNN. Starting from the operations set $\cur L_3$, we build an exhaustive Context-Free Grammar (CFG) able to generate $\ML{\cur L_3}$. This CFG is reduced to remove unnecessary operations among the rules while keeping the equivalence with $\WL 3$. From the variables of this reduced CFG, GNN inputs are easily deduced. Then, the rules of the CFG determine the GNN layers update functions. As a result of this methodology, we propose a new model called Grammatical Graph Neural Network (G$^2$N$^2$) that is provably $\WL 3 $.

  
The contributions of this work are the following : (i) A generic framework to design a GNN from any fragment of an algebraic language; (ii) The instantiation of the framework on $\ML{\cur L_3}$ resulting in G$^2$N$^2$, a provably $\WL 3 $ GNN; \textbf(iii); An experimental validation of the set of rules; (iv) Numerous experiments demonstrating that G$^2$N$^2$ outperforms existing $\WL 3$ GNNs on various downstream tasks.

The paper is structured as follows. Section \ref{sec: 2} introduces the necessary background, by defining MATLANG, its link with WL and CFGs. Section \ref{sec: 3} describes our framework and presents the resulting G$^2$N$^2$ architecture, which is experimentally evaluated in section \ref{sec: 4}.

\section{From MATLANG and Weisfeiler-Lehman to Context-Free Grammars and Languages}\label{sec: 2}

Let $\cur G = (\cur V, \cur E)$ be an undirected graph where $\cur V = \intervalleentier 1 n$ is the set of $n$ nodes and $\cur E \inclu \cur V \fois \cur V$ is the set of edges. The adjacency matrix $A \dans \{0,1\}^{n\fois n}$ represents the connectivity of $\cur G$.

\begin{defin}[MATLANG (\cite{matlang})]

MATLANG is a matrix language with an allowed operation set $\{+,\cdot, \odot ,\transpose  \ ,\mathrm{Tr},  \mathrm{diag}, \onevector ,  \fois, f \}$ denoting respectively matrix addition, matrix and element-wise multiplications, transpose and trace computations, diagonal matrix creation from a vector, column vector of $1$ generation, scalar multiplication, and element-wise function applied on a scalar, a vector or a matrix. Restricting the set of operations to a subset $\cur L$ defines a fragment of MATLANG denoted $\ML {\cur L}$. $s(X) \dans \R$ is a sentence in $\ML{\cur L}$ if it consists of consistent consecutive operations in $\cur L$, operating on a given matrix $X$, resulting in a scalar value. \textit{As an example, $s(X) = \transpose \onevector \left( X^2 \odot \diag \onevector \right) \onevector$ is a sentence of $\ML{ \{ \cdot , \transpose \ , \onevector , \mathrm{diag} , \odot \} }$ computing the trace of $X^2$.}
\end{defin}

Equivalences between $\ML{\cur L_1}$ and $\ML{\cur L_3}$ with $\cur L_1 = \{\cdot , \transpose \ , \onevector , \mathrm{diag} \}$, $\cur L_3 = \{\cdot , \transpose \ , \onevector , \mathrm{diag} , \odot \}$ and respectively the $\WL 1$ and $\WL 3$ tests are shown in \cite{Geerts}: two graphs are indistinguishable by the $\WL 1$ (resp. $\WL 3$) test if and only if applying any sentence of $\ML{\cur L_1}$ (resp. $\ML{\cur L_3}$) to their adjacency matrices gives the same scalar. Adding $\{+, \times,f\}$ does not improve the expressive power of the fragment (\cite{Geerts}).



Transposed in a Machine Learning context, a MATLANG-based GNN will inherit the $\WL 3$  expressive power of $\ML{\cur L_3}$ if it is able to generate any sentence of the fragment while learning the downstream task. To reach this objective, we will instantiate the fragment as a Context Free Language, entirely described by a set of production rules\footnote{Figure \ref{fig:fullgraph} in appendix \ref{app:CFG} illustrates the process of sentence generation from a grammar.}. 

\begin{defin}[Context-Free Grammar and Language]\label{def:CFG}
A Context-Free Grammar (CFG) $G$ is a 4-tuple $(V,\Sigma, R, S)$ with $V$ a finite set of variables, $\Sigma$ a finite set of terminal symbols, $R$ a finite set of rules $V \vers \left(V\union \Sigma \right)^*$, $S$ a start variable. \textit{$R$ completely describes a CFG with the convention that $S$ is placed on the top left.}

$B$ is a Context-Free Language (CFL) if there exists a CFG $G$ such that $B = L(G) := \{w,w \dans \Sigma^* \text{ and } S \derive w\}$ where $S \derive w$ denotes that $S$ can be transformed into $w$ by applying an arbitrary number of rules in $G$.
\end{defin}





\section{From $\ML{\cur L_3}$ to the $\WL 3$ G$^2$N$^2$}\label{sec: 3}

In this section, the proposed generic framework is described and instantiated on the $\ML{\cur L_3}$ fragment to generate our G$^2$N$^2$ model. As shown by Figure \ref{fig:CFGtognn}, 3 steps are involved:\\
\textbf{(1) defining the exhaustive CFG that generates the language,} \textbf{(2) reducing the exhaustive CFG,} \textbf{(3) translating the variables and the rules of the reduced CFG into GNN input and model layer.} To keep the expressive power of the language at each step, the equivalence between the successive representations must be ensured.  


\begin{figure}[h]
\centering
\includegraphics[width=.97\textwidth]{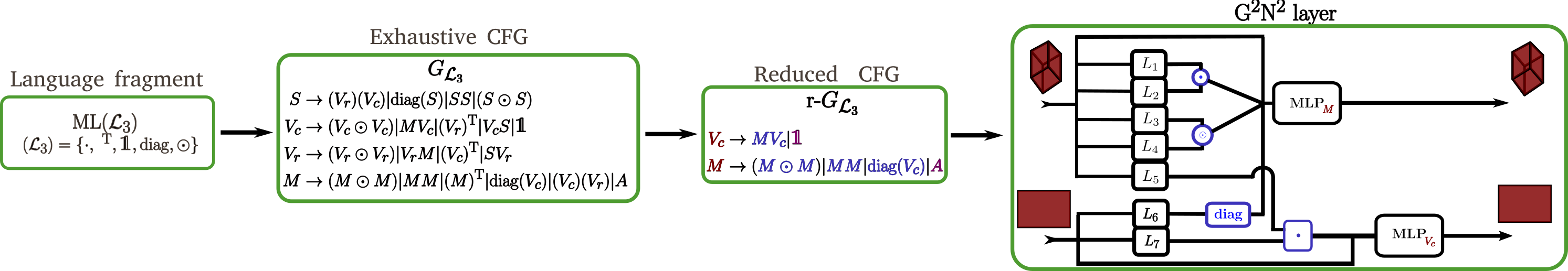}
\caption{\textbf{Overview of the proposed framework instantiated on $\ML{\cur L_3}$.}
}
\label{fig:CFGtognn}
\end{figure}


\subsection{From $\ML{\cur L_3}$ to the exhaustive CFG $G_{\cur L_3}$} 

The first step of the framework translates the language fragment into an exhaustive CFG (variables, terminal symbols and rules). For $\ML{\cur L_3}$, the variables of  the exhaustive CFG denoted $G_{\cur L_3}$ are defined using  the following proposition proved in appendix \ref{subsec:proofl3}.

\begin{prop}\label{prop:ML3state}
For any square matrix of size $n^2$, operations in ${\cur L_3}$ can only produce square matrices of the same size, row, or column vectors of size $n$ or scalars. 
\end{prop}

In the context of our study, as in \cite{Geerts}, $\ML{\cur L_3}$ is applied on the  adjacency matrix. Thus, proposition \ref{prop:ML3state} ensures that $G_{\cur L_3}$ variables are restricted to square matrix ($M$), column vector ($V_c$), row vector ($V_r$) and scalar ($S$). Once the variables defined, the production rules of $G_{\cur L_3}$ are obtained by enumerating all possible operations in $\ML{\cur L_3}$ that produce such variables. The rule $M \vers A$ is added in order to be compliant with \cite{Geerts}. All the rules composing $G_{\cur L_3}$ are synthesised in equation \ref{eq:gl3} where $|$ denotes the classical OR operator since a variable can be produced by different rules. They fully characterise the CFG\footnote{Elements that are not variables in the rule set are said to be terminal symbols.}.

The following theorem ensures that the language generated by $G_{\cur L_3}$ is $\ML{\cur L_3}$. Thus $G_{\cur L_3}$ is as expressive as $\ML{\cur L_3}$.

\begin{thm}\label{thm: exhaustivegl3}
For $G_{\cur L_3}$ defined by
\begin{align}\label{eq:gl3}
S &\vers (V_r)(V_c) \ | \ \diag S \ | \ SS \ | \ (S \odot S) \\
V_c &\vers (V_c \odot V_c) \ | \ MV_c \ | \ \transpose{(V_r)} \ | \ V_c S \ | \ \onevector \nonumber \\
V_r &\vers (V_r \odot V_r) \ | \ V_r M \ | \ \transpose{(V_c)} \ | \ S V_r \nonumber \\
M &\vers (M\odot M) \ | \ MM \ | \ \transpose{(M)} \ | \ \diag {V_c} \ | \ (V_c)(V_r) \ | \ A \nonumber
\end{align}
we have$$L(G_{\cur L_3}) =\ML{\cur L_3}.$$

\end{thm}
The full proof is provided in appendix (\ref{subsec:proofl3}). Its idea is the following. As any operation in the rules of $G_{\cur L_3}$ belongs to $\cur L_3$, it is clear that $L(G_{\cur L_3}) \inclu \ML {\cur L_3} $. The reciprocal inclusion is proven by induction over the number of $\ML{\cur L_3}$ operations.

Given the results of theorem \ref{thm: exhaustivegl3}, the next step reduces the CFG by exploiting the redundancies in the exhaustive set of rules and variables.

\subsection{From $G_{\cur L_3}$ to r-$G_{\cur L_3}$}

An example of redundancy can be observed in the following proposition proved in the appendix (see \ref{subsec:proofl3}).

\begin{prop}\label{prop:vectorodot}
For any square matrix $M$, column vector $V_c$ and row vector $V_r$, we have
\begin{align*}
M \odot (V_c\cdot V_r)  &= \diag {V_c} M \diag {V_r} 
\end{align*} 
\end{prop} 

The following theorem guarantees that the following reduced grammar preserves expressiveness.

\begin{thm}[$\ML{\cur L_3}$ reduced CFG ]\label{thm : ML3reduced}
Let r-$G_{\cur L_3}$ be defined by
\begin{align}\label{eq:reducedCFG3}
V_c &\vers  MV_c \ | \ \onevector  \\ \nonumber
M &\vers (M\odot M) \ | \ MM \ | \ \diag {V_c} \ | \ A 
\end{align}

r-$G_{\cur L_3}$ is as expressive as $G_{\cur L_3}$.

\end{thm}

\begin{proof}

For any scalar $S,S'$, since $\diag S$, $S \odot S'$ and $S\cdot S'$ produce a scalar, the only way to produce a scalar from other variables is to pass through a vector dot product. Hence the scalar variable $S$ and its rules can be removed from $G_{\cur L_3}$ without loss of expressive power.

Since $\diag v w = v\odot w$ for any vector $v,w$, the vector Hadamard product can be removed from the vector rules. Proposition \ref{prop:vectorodot} allows to remove $V_c V_r$ from the rules of $M$
since the results of subsequent mandatory operations $MM$ or $MV_c$ can be obtained with other combinations. At this stage, the following intermediate CFG i-$G_{\cur L_3}$ is as expressive as $G_{\cur L_3}$ since it can compute any vector of $G_{\cur L_3}$.
\begin{align*}
V_c &\vers  MV_c \ | \ \transpose{(V_r)} \ | \ \onevector \\
V_r &\vers  V_r M \ | \ \transpose{(V_c)} \\
M &\vers (M\odot M) \ | \ MM \ | \ \transpose{(M)} \ | \ \diag {V_c} \ | \ A 
\end{align*}
Since the remaining $M$ rules preserve symmetry, $\transpose{(M)}$, the variable $V_r$ and its rules can be removed. It conducts to r-$G_{\cur L_3}$ defined in equation \ref{eq:reducedCFG3}. 
\end{proof}

From these two steps, the resulting CFG r-$G_{\cur L_3}$ possesses the expressive power of the fragment $\ML{\cur L_3}$. The next step is a translation of r-$G_{\cur L_3}$ into a GNN layer.

\subsection{From r-$G_{\cur L_3}$ to a G$^2$N$^2$ layer model}\label{subsec:ML1}

\begin{figure}[b!]
    \centering
    \includegraphics[width = \textwidth]{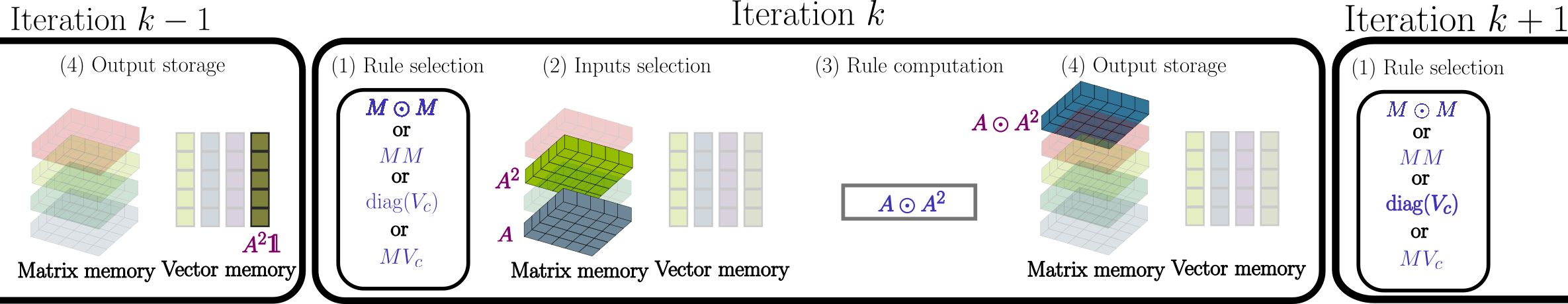}
    \caption{\textbf{4-step iterative procedure} (1) Rule selection (2) Inputs selection: inputs relative to the chosen rule are selected from matrix and/or vector memories (opaque matrices) (3) Rule computation (4) Output storage: the produced output is stored into its relative memory.}

    \label{fig:pdatognn}
\end{figure}

In r-$G(\cur L_3)$, any vector $V_c$ or matrix $M$ is produced by applying a sequence of rules on $A$ and $\onevector$. 
As a consequence, every matrix or vector can be attained through an iterative rule selection procedure using matrix and vector memories that store intermediate variables. 
Figure \ref{fig:pdatognn} describes this procedure: each iteration starts by choosing a rule in r-$G(\cur L_3)$ before selecting corresponding inputs in the memories. Applying the selected rule produces a new matrix or a new vector, which is added to the appropriate memory.


Translating this iterative procedure into a GNN based on a sequence of layers requires a memory management strategy and a selection mechanism for both rules and inputs, while taking into account learning issues related to downstream tasks.

The matrix memory aims at storing the variables \textcolor{Red}{$M$} produced by successive applications of r-$G(\cur L_3)$ rules. This memory is represented by a three order tensor \textcolor{Red}{$\cur C^{(l)}$} where produced matrices (i.e. edges embeddings in a GNN context) are stacked across layers on the third dimension. In the same way, the vector memory is dedicated to the variables \textcolor{Red}{$V_c$}  that correspond to nodes embeddings. It is as a matrix \textcolor{Red}{$H^{(l)}$} where produced vectors are stacked on the second dimension. \textcolor{Red}{$\cur C^{(l)}$} and \textcolor{Red}{$H^{(l)}$} are the input of the $l$-th GNN layer which produces \textcolor{Red}{$\cur C^{(l+1)}$} and \textcolor{Red}{$H^{(l+1)}$}  as output, as depicted in Figure \ref{fig:g2n2layer} describing a G$^2$N$^2$ layer. While the memory of the iterative procedure grows with each iteration, a tractable GNN architecture constrains the stacking dimension to be set to a given value at each layer.

In order to mimic the rule selection procedure of Figure \ref{fig:pdatognn}, a G$^2$N$^2$ layer applies a selection among the outputs produced by all the rules.   
Such a strategy enables to compute in parallel several occurrences of any rule with multiple inputs. Hence, parameterised quantities \textcolor{Green}{$b_\odot$},\textcolor{Green}{$b_\cdot$},\textcolor{Green}{$b_{\mathrm{diag}}$},\textcolor{Green}{$b_{MV_c}$} of the rules  \textcolor{Blue}{$(M\odot M)$}, \textcolor{Blue}{$(MM)$}, \textcolor{Blue}{$\diag {V_c}$}, \textcolor{Blue}{$MV_c$} are computed in parallel taking as input linear combination $L_i$ of slices of \textcolor{Red}{$\cur C^{(l)}$} and slices of \textcolor{Red}{$H^{(l)}$}. These linear combinations are able to select among inputs  \textcolor{Red}{$\cur C^{(l)}$} and  \textcolor{Red}{$H^{(l)}$} through a learning paradigm.  

Both the matrix rules outputs and the tensor \textcolor{Red}{$\cur C^{(l)}$} (obtained through a skip connection which guarantees the memory persistence) are fed to MLP$_M$ that produces the output tensor \textcolor{Red}{$\cur C^{(l+1)}$} with a selected third dimension size $S^{(l+1)}$. This MLP allows in the same time to simulate the rule selection, to compress the matrix output of the layer to a fixed size and to learn a point wise function for solving specific downstream tasks. It relates to the set of operations $\{+,\times , f\}$ of $MATLANG$ and does not modify the expressive power (\cite{Geerts,maron2019provably}).
The output  \textcolor{Red}{$H^{(l+1)}$} is provided similarly through MLP$_{V_c}$. Figure \ref{fig:g2n2layer} describes the whole model of a $\ggnn$ layer.

\begin{figure}[h]
    \centering
    \includegraphics[width = \textwidth]{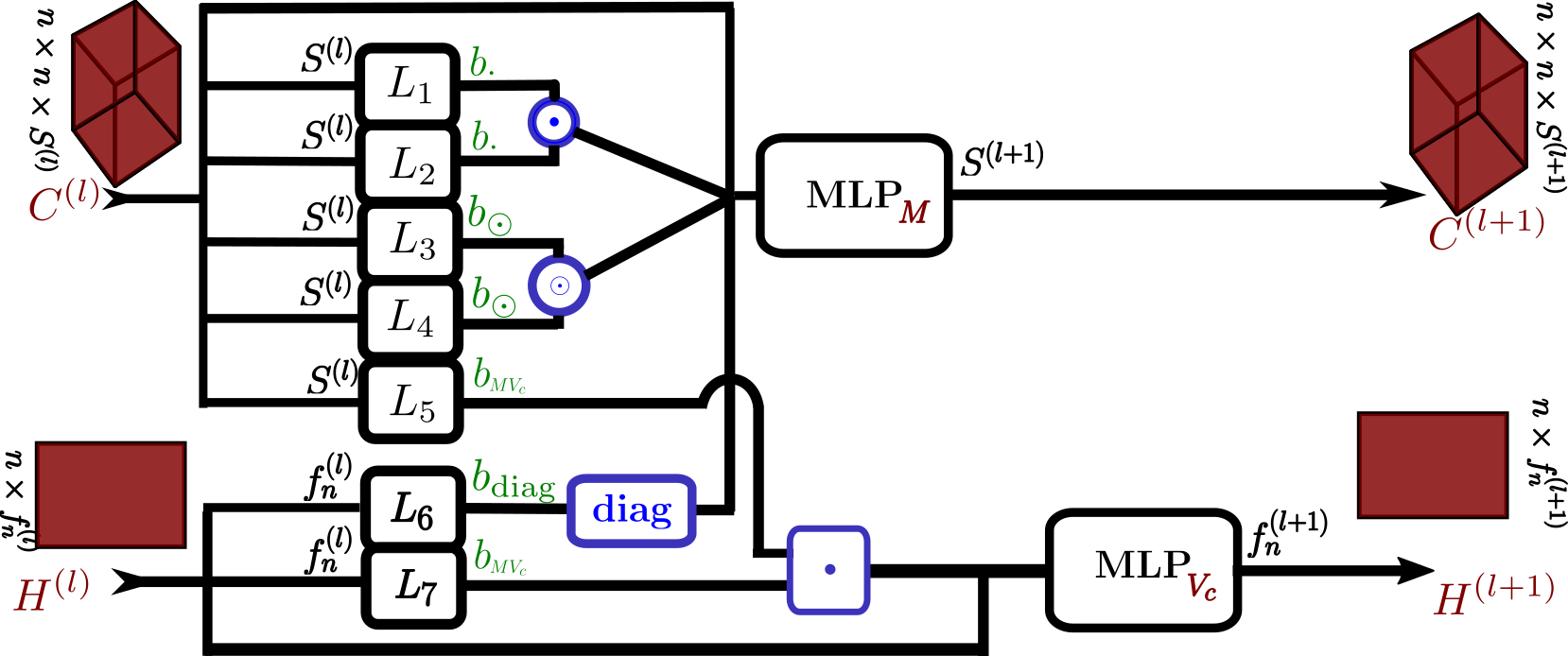}
    \caption{$L_{1}$-$L_5$ combine the $S^{(l)}$ slices of \textcolor{Red}{$\cur C^{(l)}$}  into $2$\textcolor{Green}{$b_\odot$}, $2$\textcolor{Green}{$b_\cdot$} and \textcolor{Green}{$b_{MV_c}$} matrices. $L_6$-$L_7$ combine the $f_n^{(l)}$ columns of \textcolor{Red}{$H^{(l)}$} into \textcolor{Green}{$b_{\mathrm{diag}}$} and \textcolor{Green}{$b_{MV_c}$} vectors. From the outputs of $L_1$ -$L_7$, multiple occurrences of r-$G(\cur L_3)$ rules \textcolor{Blue}{$(M\odot M)$}, \textcolor{Blue}{$(M M)$}, \textcolor{Blue}{$(\mathrm{diag}(V_c)$} and \textcolor{Blue}{$(MV_c)$} are computed. The obtained outputs and the layer inputs are fed to MLP$_M$ and MLP$_{V_c}$ providing the layer outputs \textcolor{Red}{$\cur C^{(l+1)}$} and \textcolor{Red}{$H^{(l+1)}$.}}
\label{fig:g2n2layer}
\end{figure}

Formally, the update equations are :
\begin{align}
\textcolor{Red}{\cur C^{(l+1)}} &= \text{MLP}_{M}(\textcolor{Red}{\cur C^{(l)}}||L_1(\textcolor{Red}{\cur C^{(l)}})\textcolor{Blue}{\cdot} L_2(\textcolor{Red}{\cur C^{(l)}})|| 
L_3(\textcolor{Red}{\cur C^{(l)}}) \textcolor{Blue}{\odot} L_4(\textcolor{Red}{\cur C^{(l)}})||\textcolor{Blue}{\mathrm{diag}}({L_6(\textcolor{Red}{H^{(l)}}))}), \\
\textcolor{Red}{H^{(l+1)}} &= \text{MLP}_{V_c} ( \textcolor{Red}{ H^{(l)}} ||L_5(\textcolor{Red}{\cur C^{(l)}}) \cdot L_7(  \textcolor{Red}{H^{(l)}})), \label{eq:GMNnodeupdate}
\end{align}
where $||$ is the concatenation. MLP$_M$ and MLP$_{V_c}$ are learnable MLPs, and $L_i$ are learnable linear blocks acting on the third dimension of \textcolor{Red}{$\cur C^{(l)}$} or the second dimension of \textcolor{Red}{$H^{(l)}$}: $L_{1,2} : \R^{S^{(l)}} \vers \R^{\textcolor{Green}{b_\cdot^{(l)}}}$, $L_{3,4} : \R^{S^{(l)}} \vers \R^{\textcolor{Green}{b_\odot^{(l)}}}$, $L_{5} : \R^{S^{(l)}} \vers \R^{\textcolor{Green}{b_{MV_c}^{(l)}}}$, $L_{6} : \R^{f^{(l)}} \vers \R^{\textcolor{Green}{b_{\mathrm{diag}}^{(l)}}}$, $L_{7} : \R^{f^{(l)}} \vers \R^{\textcolor{Green}{b_{MV_c}}^{(l)}}$, MLP$_{M} : \R^{S^{(l)}+\textcolor{Green}{b_{\cdot}}^{(l)}+\textcolor{Green}{b_{\odot}}^{(l)}+\textcolor{Green}{b_{\mathrm{diag}}}^{(l)}} \vers \R^{S^{(l+1)}}$, and MLP$_{V_c} : \R^{f^{(l)}+\textcolor{Green}{b_{MV_c}}^{(l)}} \vers \R^{f^{(l+1)}}$. 

\subsection{G$^2$N$^2$ architecture and its expressive power}\label{subsec:g2n2}

Figure \ref{fig:GMNarchi} depicts the global G$^2$N$^2$ architecture. The inputs are \textcolor{Violet}{$H^{(0)}$} and \textcolor{Violet}{$\cur C^{(0)}$}. \textcolor{Violet}{$H^{(0)}$} of size $n \fois f_n + 1$ is the feature nodes matrix concatenated with $\onevector$. \textcolor{Violet}{$\cur C^{(0)}$}$\dans \R^{n \fois n \fois (f_e+1)}$ is a stacking on the third dimension of the adjacency matrix $A$ and the extended adjacency tensor $E$ of size $n \fois n \fois f_e$, where $f_e$ is the number of edge features.

After the last layer, permutation equivariant readout functions are applied on both $H^{(l_{\text{end}})}$ and the diagonal and off-diagonal components of $\cur C^{(l_{\text{end}})}$. Readout outputs are then fed to a dedicated decision layer.

\begin{figure*}[h]
\centering

\includegraphics[width = \textwidth]{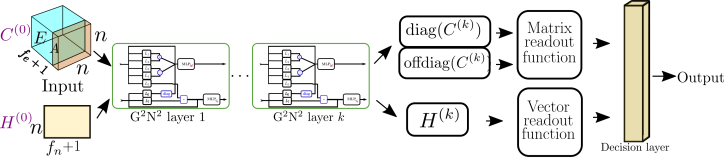}

\caption{\textbf{Model of G$^2$N$^2$ architecture from the graph to the output}. Each layer updates node and edge embeddings and readout functions are applied independently on $H^{(k)}$ and the diagonal and the non-diagonal elements of $\cur C^{(k)}$.}
\label{fig:GMNarchi}
\end{figure*}

\begin{thm}[Expressive power of G$^2$N$^2$]\label{thm: GMN3wl}
G$^2$N$^2$ is able to produce any matrix and vector of L(r-$G_{\cur L_3}$). It is as expressive as $\WL 3$.
\end{thm}
\begin{proof}
We show that G$^2$N$^2$ at layer $l$ can produce all matrices and vectors r-$G_{\cur L_3}$ can produce, after $l$ iterations. It is true for $l=1$. Indeed, at r-$G_{\cur L_3}$ first iteration, we obtain the matrices $\identite$, $A$, $A^2$ and the vectors $\onevector$ and $A\onevector$. Since any of $L_{i}(\cur C^{(0)})$ for $i \dans \intervalleentier 1 5$ is a linear combination of $A$ and $\identite$, G$^2$N$^2$ can produce those vectors and matrices in one layer.

Suppose that there exists $l>0$ such that G$^2$N$^2$ can produce any of the matrices and vectors r-$G_{\cur L_3}$ can after $l$ iterations. We denote by $\cur A_l$ the set of those matrices and by $\cur V_l$ the set of those vectors. At the $l+1$-th iteration, we have $\cur A_{l+1} = \{M\odot N, MN, \diag{V_c} |  M,N \dans \cur A_l\,V_c \dans \cur V_l\}$ and $V_{l+1} = \{MV_c |M\dans \cur A_k,V_c \dans \cur V_l\}$. Let $M,N \dans \cur A_l$ and $V_c \dans \cur V_l$ then by hypothesis G$^2$N$^2$ can produce $M,N$ at layer $l$.
Since $L$ produces at least two different linear combinations of matrices or vectors in respectively $\cur A_l$ and $\cur V_l$, $MN$, $M \odot N$, $MV_c$ and $\diag{ V_c}$ are reachable at layer $l+1$. Thus $\cur A_{l+1}$ is included in the set of matrices G$^2$N$^2$ can produce at layer $l+1$ and $V_{l+1}$ is included in the set of vectors G$^2$N$^2$ can produce at layer $l+1$.
\end{proof}



\subsection{Discussion : $\ggnn$ in the $\WL 3$ GNN literature}


\paragraph{Positioning w.r.t \cite{maron2019provably}}

From PPGN layer description (see Figure 2 of \cite{maron2019provably}), one can build the following CFG:
\begin{align}\label{eq:reducedCFGPPGN}
M &\vers  MM \ | \ \diag {\onevector} \ | \ A 
\end{align}
where $M \vers  \diag{\onevector}$ and $M \vers  A$ represent inputs of the architecture as for G$^2$N$^2$. Compared to r-$G_{\cur L_3}$, $V_c$ variable and $M \vers M\odot M$, $\diag{V_c}$ and $MV_c$ rules are missing. As a consequence, PPGN $\WL 3$ expressive power is not formally inherited from $\ML{\cur L_3}$. As stated in the introduction, it relies on PPGN ability to mimic $\FWL 2$ colouring and hashing steps. Its capacity to  implement the colouring step relies on MLP universality. It explains that PPGN can approximate the missing rules of r-$G_{\cur L_3}$. To guarantee such  an approximation, a certain width and depth for MLP are needed. $\ggnn$ does not suffer from these computational constraints since it only needs to provide linear combinations as arguments of the operations.



3-IGN processes on sets of third order tensors. As a consequence, it cannot be described by a CFG derived from $\ML{\cur L_3}$. However, we can connect our approach with $k$-IGN. For $k$-IGN, the expressive power is related to MLPs and to the basis of linear equivariant operators defined in \cite{maron2018invariant}. In some ways, these operators can be linked to the algebraic operations of our framework. An example of such a link is given in appendix \ref{subsec:cfgarch} for 2-IGN .

\paragraph{Positioning w.r.t \cite{breakingthelimits}}

In appendix \ref{subsec:cfgml1}, we show that GNNML1 (\cite{breakingthelimits}) can be seen as the resulting GNN of our framework applied on $\ML{\cur L_1}$.  
Concerning GNNML3, a CFG can also be deduced from its layer
$$
V_c \vers C_1 V_c \ | \ \cdots \ | \
C_k V_k \ | \ V_c \odot V_c \ | \ \onevector
$$
where the matrices $C_1, \cdots , C_k$ are defined using the adjacency matrix, exponential pointwise function, and matrix Hadamard product. 
As some rules and variables are missing compared to r-$G_{\cur L_3}$, it cannot formally inherit the expressive power of $\ML{\cur L_3}$.

\section{Experiments}\label{sec: 4}

This section is dedicated to the experimental validation of both the framework and $\ggnn$. It answers 4 questions \textbf{Q1}-\textbf{Q4}. \textbf{Q1} concerns the validation of the reduced grammars. \textbf{Q2} and \textbf{Q3} relate to performance of $\ggnn$ on downstream regression/classification tasks. \textbf{Q4} concerns the model spectral ability. Experimental settings are detailed in appendix \ref{subsec:setting}.

\subsection*{\textbf{Q1}: Is the reduction of grammar relevant and optimal?}

This experiment aims at investigating the impact of the CFG reduction scheme through the comparison of different models built using $G_{\cur L_3}$ (see Figure \ref{fig:exhaust_gnn} in appendix \ref{app:CFG}),  i-$G_{\cur L_3}$ (see Figure \ref{fig:interm_gnn} in appendix \ref{app:CFG}) and r-$G_{\cur L_3}$ (see Figure \ref{fig:g2n2layer}). The comparison is completed by an ablation study the aim of which is to investigate the importance of each rule of r-$G_{\cur L_3}$.

We use a graph regression benchmark called QM9 which is composed of 130K small molecules (\cite{ramakrishnan2014quantum,wu2018moleculenet}). For this study, we focus on the regression target $R^2$, which is known to be the most difficult to predict. As in \cite{maron2019provably}, the dataset is randomly split into training, validation, and test sets with a respective ratio of $0.8$, $0.1$ and $0.1$. The edge and vector embeddings are always of size $32$.

The results are presented in Figure \ref{fig:ablation} where each model is represented in a 2-D space using the Mean Absolute Error (MAE) of the best validation model on the test set and the number of parameters of the model. These results corroborate the theoretical results of section \ref{sec: 3} : the MAE scores are comparable for $G(\cur L_3)$, i-$G(\cur L_3)$ and r-$G(\cur L_3)$ while the number of parameters is divided by 2 when reducing from $G(\cur L_3)$ to r-$g(\cur L_3)$.
As expected, removing rules in r-$G(\cur L_3)$ leads to a drop of MAE performance. It also offers insights into the weights of each operation in the model and enables informed pruning of the GNN if the expressiveness is not required.

\begin{figure}[h!]
    \centering
    \includegraphics[width=.6\textwidth]{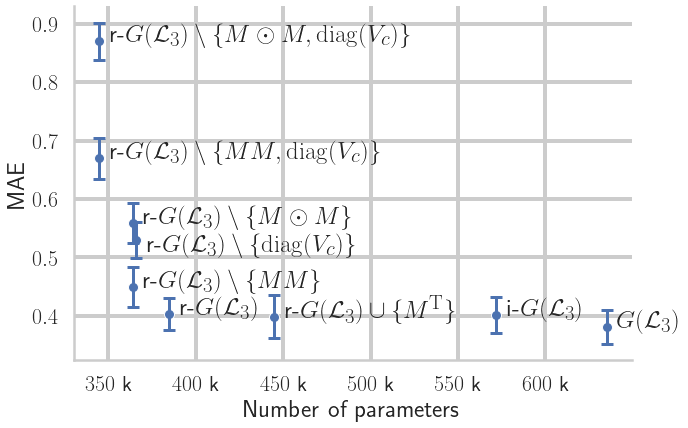}
    \caption{Comparison of model size and MAE performance on the QM9 $R^2$ target for GNNs derived from $G(\cur L_3)$. Each GNN model is denoted by its set of rules. Over-reduced grammar from r-$G(\cur L_3)$ are denoted with a $\setminus$, whereas $\cup$ denotes the addition of a rule to the set.}
    \label{fig:ablation}
\end{figure}



\subsection*{\textbf{Q2}: Does G$^2$N$^2$ perform better than other $\WL 3$ GNNs for regression?}

For this second question, we also use the dataset QM9, but we consider the $12$ regression targets. The dataset is randomly split into training, validation, and test sets with the same ratio as in \textbf{Q1}. The experimental settings are detailed in appendix \ref{subsec:setting}.
G$^2$N$^2$ results are compared to those in \cite{huang2022boosting, maron2019provably} including 1-GNN and 1-2-3-GNN (\cite{morris2019wl}), DTNN (\cite{wu2018moleculenet}), DeepLRP (\cite{chen2020can}), NGNN (\cite{zhang2021nested}), I$^2$-GNN (\cite{huang2022boosting}) and PPGN \cite{maron2019provably}. The metric is the MAE of the best validation model on the test set. The mean epoch duration is measured on the same device for comparison between $\ggnn$ and PPGN.

As in \cite{maron2019provably}, we made two experiments. The first one consists in learning one target at a time while the second learns every target at once. In the first experiment, we have $S^{(l)}=f_n^{(l)} = 64$ and in the second $S^{(l)}=f_n^{(l)} = 32$. Partial results focusing on the two best models are given in Table \ref{tab:QM9}. Complete results and experiment settings are given in appendix \ref{subsec:setting}. In both cases, G$^2$N$^2$ obtains the best results while learning faster.
\begin{table}[h!]
    \centering
    \caption{Results on QM9 dataset focusing on the best methods. On the left part, each target is learned separately while on the right side all targets are learned at the same time. The metric is MAE, the lower, the better. Complete results can be found in Table \ref{tab:QM9completeone}.}\label{tab:QM9}
    \csvautobooktabular{data/EXPQM9GMNreduct.txt} $ \qquad \qquad $
    \csvautobooktabular{data/EXPQM12.tex}
    
\end{table}


\subsection*{
\textbf{Q3}: Does G$^2$N$^2$ perform better than other $\WL 3$ GNNs for classification?
}

For graph classification, we evaluate G$^2$N$^2$ on the classical TUD benchmark (\cite{Morris+2020}), using the evaluation protocol of \cite{xu2018powerful}. Results of GNNs and Graph Kernel are taken from \cite{bouritsas2022improving}. Since the number of node and edge features is very different from one dataset to another, the parameter settings for each of the 6 experiments related to these datasets can be found in Table \ref{tab:parmtud} of appendix \ref{subsec:setting}. Partial results focusing on G$^2$N$^2$ performance are given in Table \ref{tab:TUDd}. Complete results can be seen in Table \ref{tab:TUDcomplete} of appendix \ref{subsec:setting}. G$^2$N$^2$ achieves better than rank 2 for five of the six datasets.

\begin{table}[h]

\centering
    \caption{Results of G$^2$N$^2$ on TUD dataset compared to the best GNN competitor. The rank of G$^2$N$^2$ within GNNs is in parentheses. The metric is accuracy, the higher, the better. Complete results can be seen in Table \ref{tab:TUDcomplete}.}
    \footnotesize{
    \csvautobooktabular{data/TUDreduct.txt}
    }
    \label{tab:TUDd}
\end{table}

\subsection*{
\textbf{Q4}: Can G$^2$N$^2$ learn band-pass filters in the spectral domain?
}
As shown in \cite{balcilarexpresspower}, the spectral ability of a GNN and particularly its ability to operate as band-pass filter is an important property of a model for certain downstream tasks. In order to assess the spectral ability of G$^2$N$^2$ and answer Q4, we use the protocol and node regression dataset of \cite{breakingthelimits}. $R^2$ score is used to compare performance.


Table \ref{tab:filter} reports the comparison of G$^2$N$^2$ to  CHEBNET (\cite{chebnet}), PPGN and GNNML3, citing the results from \cite{breakingthelimits}. CHEBNET and GNNML3 are spectrally designed and manage to learn low-pass, high-pass, and band-pass filters. For the three filter types, G$^2$N$^2$  reaches comparable performance. In appendix \ref{sec:spectral}, a theoretical analysis shows that a $\WL 3$ GNN is able to approximate any type of filter. 

As shown in the table, PPGN fails to learn band-pass filters. This result which contradicts the previous theoretical result is related to memory and complexity issues. Hence, as explained before, PPGN needs a deeper and wider architecture for this task that can not be reached for 900 node graphs (\cite{breakingthelimits}).  


 
 

\begin{table}[h]
    \centering
    \caption{$R^2$ score on spectral filtering node regression. Results are a median of 10 runs.}
    \footnotesize{
    \csvautobooktabular{data/filteringGMN.txt}
    }
    \label{tab:filter}
\end{table}

\section{Conclusion}\label{sec: 5}



Designing provably expressive GNNs has been the target of many recent works. In this paper, we have proposed a new theoretical framework for designing such models. Taking as input a language fragment, i.e. a set of algebraic operations, the framework uses reduced Context Free Grammars to drive the generation of graph neural architectures with provable expressive power. The framework provides insights about the importance of algebraic operations in the resulting model, as shown by the experimental grammar ablative study. Such results can be useful for improving the performance vs. computational cost trade-off for a given task.   

Through the application of the framework to $\ML{\cur L_3}$ fragment, the paper also proposed the provably $\WL 3$ $\ggnn$ model. In addition to these theoretical guarantees, $\ggnn$ is also shown to be efficient for solving graph learning downstream tasks through several experiments on regression, classification and spectral filtering benchmarks. In all cases, $\ggnn$ outperforms $\WL 3$ GNNs, while being more tractable.

Beyond these results, we are convinced that our contributions open the door to the design of models surpassing $\WL 3$, taking as root a language manipulating tensors of greater order (\cite{geerts2022expressiveness}). Moreover, the framework is not limited to GNN models since many other learning paradigm can be modeled with algebraic languages.

\newpage

\bibliography{{./Biblio/biblio}}

\newpage
\appendix

This appendices provide additional content to the main paper $\ggnn$: Weisfeiler and Lehman go grammatical.
\section{CFG and GNN}\label{app:CFG}

\subsection{From $\ML{\cur L_1}$ to $\WL 1$ GNN}\label{subsec:cfgml1}

In this subsection, the reduction framework described in section \ref{sec: 3} is applied to the fragment $\ML {\cur L_1}$ as shown by Figure \ref{fig:CFGtognn1}. 
\begin{figure}[h]
\centering
\includegraphics[width=\textwidth]{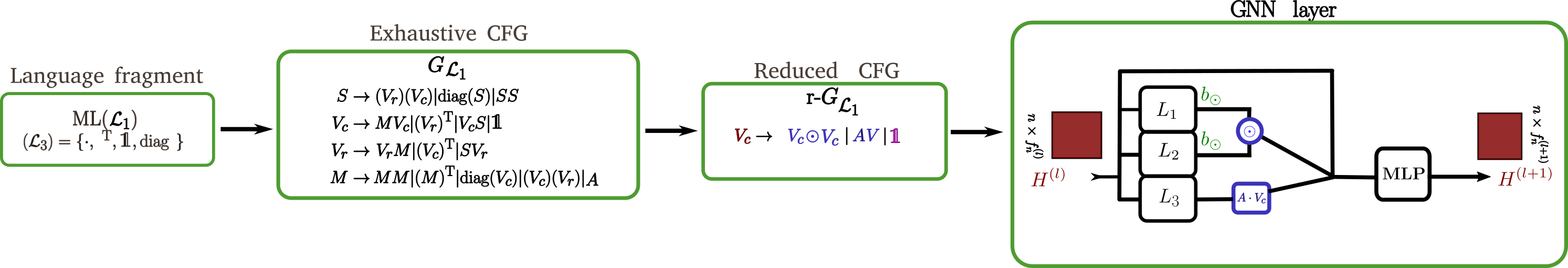}
\caption{Overview of the proposed framework instantiated on $\ML{\cur L_1}$.}
\label{fig:CFGtognn1}
\end{figure}

To determine the variables of the CFG, the following proposition is necessary. 
\begin{prop}\label{prop:statereachML1sup}
For any square matrix of size $n^2$, operations in ${\cur L_1}$ can only produce square matrices of size $n^2$, row or column vectors of size $n$ or scalars. 
\end{prop}

\begin{proof}
Let $M$ be a square matrix of size $n^2$,  we first need to prove that $\cur L_1$ can produce square matrices of size $n^2$, row and column vectors of size $n$ and scalars.

Yet $\onevector := \onevector (M)$ is a column vector of size $n$, $\transpose{\onevector}$ is a row vector of size $n$, $\transpose{\onevector}\cdot \onevector$ is a scalar and $M$ is a square matrix of size $n^2$.

Then let $N \dans \R^{n \times n}$, $v \dans \R^n$, $w \dans \dual{\R^n}$, and $s \dans \R$ be words of $\ML{\cur L_1}$, we have 

\begin{align*}
M\cdot N &\dans \R^{n \times n} &\quad M \cdot v &\dans \R^n &\quad w \cdot M &\dans \dual{\R^n} &\quad w \cdot v &\dans \R \\
v\cdot w &\dans \R^{n \times n} &\quad \onevector(v) &\dans \R^n  &\quad  \transpose v &\dans \dual{\R^n} &\quad \onevector (w) &\dans \R \\
 \transpose M &\dans \R^{n \times n} &\quad \transpose w &\dans \R^n&\quad s \cdot w &\dans \dual{\R^n}  &\quad \diag s &\dans \R \\
 \diag v &\dans\R^{n \times n} &\quad \onevector(M) &\dans \R^n & & & s\cdot s &\dans \R \\
 & &   v\cdot s &\dans \R^n & & & \onevector(s) &\dans \R
\end{align*}
Since this is an exhaustive list of all operations $\ML{\cur L_1}$ can produce with these words, we can conclude.
\end{proof}
\begin{thm}[$\ML{\cur L_1}$ reduced CFG ]\label{thm : ML1reduced}
The following CFG denoted r-$G_{\cur L_1}$ is as expressive as $\WL 1$.
\begin{align}\label{eq:reducedCFG}
V_c &\vers \diag {V_c}V_c \ | \ A V_c \ | \ \onevector
\end{align}
\end{thm}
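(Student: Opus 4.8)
The plan is to leverage the equivalence $\ML{\cur L_1}\equiv\WL 1$ from \cite{Geerts}, so that it suffices to prove that r-$G_{\cur L_1}$ — read with the convention that a scalar readout, e.g.\ $\transpose\onevector V_c$, closes a derivation, in line with the CFG-to-GNN correspondence — distinguishes adjacency matrices exactly as well as the full fragment $\ML{\cur L_1}$. First I would check by induction on derivations that $V_c$ produces only column vectors of size $n$: the base word $\onevector$ is one, and both productions preserve the property since $A$ and $\diag{V_c}$ are $n\times n$ and $\diag{u}v=u\odot v$. It follows immediately that every r-$G_{\cur L_1}$ sentence $\transpose\onevector p(A)$ is built only from operations of $\cur L_1$ (namely $\transpose$, $\onevector$, $\cdot$, $\diag{}$), hence is an $\ML{\cur L_1}$ sentence; with \cite{Geerts} this settles one inclusion, namely that r-$G_{\cur L_1}$ is \emph{at most} as expressive as $\WL 1$.

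For the other inclusion I would show that every $\ML{\cur L_1}$ sentence $s$ can be recovered from finitely many r-$G_{\cur L_1}$ sentences, arguing by structural induction on $s$ in which Proposition~\ref{prop:statereachML1} is the load-bearing ingredient: it guarantees that every sub-expression of $s$ evaluates to an $n\times n$ matrix, a size-$n$ row or column vector, or a scalar, so the induction ranges over finitely many ``types'' and, for each, over the finitely many last operations compatible with it. The invariant I would carry through is that (i) every column-vector sub-expression equals a combination $\sum_i c_i\,p_i$ of r-$G_{\cur L_1}$ vectors $p_i$ with coefficients $c_i$ that are polynomials in r-$G_{\cur L_1}$ sentences; (ii) row vectors are transposes of such; (iii) every matrix sub-expression $M$ sends a vector of type (i) to another of type (i), using $A\cdot$, $\diag{}$, the identity $\diag{u}v=u\odot v$, and, for rank-one pieces, $A(v\,\transpose w)x=(\transpose w x)(Av)$; and (iv) scalars are polynomials in r-$G_{\cur L_1}$ sentences. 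Closing the derivation with $w\cdot v$ or $\transpose\onevector(\cdot)$ then exhibits $s(A)$ as such a polynomial, so no pair of adjacency matrices agreeing on all r-$G_{\cur L_1}$ sentences can be separated by $s$; combined with the previous paragraph this gives the equivalence with $\WL 1$.

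I expect item (iii) to be the main obstacle: outer products $v\cdot w$ and their left-multiples are the only place where $\ML{\cur L_1}$ leaves the ``multiply by $A$ / Hadamard / all-ones'' vocabulary of r-$G_{\cur L_1}$, and they are what force scalar coefficients into the bookkeeping, so the delicate point is checking that those coefficients fall back into the r-$G_{\cur L_1}$ language after the final contraction to a scalar. As a cleaner alternative for this direction I would instead simulate $\WL 1$ directly: prove by induction on refinement rounds that the stable $\WL 1$ colour of a node is determined by the joint values at that node of a finite, graph-independent family of r-$G_{\cur L_1}$ vectors, the step from round $t$ to $t+1$ using that the new colour of $v$ is a function of the multiset $\{(p_1(A)_u,\dots,p_m(A)_u):u\sim v\}$, which is pinned down by the moments $\bigl(A\cdot(p_1^{\odot a_1}\odot\cdots\odot p_m^{\odot a_m})\bigr)_v$ — and each $p_1^{\odot a_1}\odot\cdots\odot p_m^{\odot a_m}$ is reached by iterating $V_c\to\diag{V_c}V_c$ and then $V_c\to AV_c$. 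Since $\WL 1$ stabilises within $n$ rounds and finitely many moments suffice on $n$ nodes, this family is finite, and the $\WL 1$-class of a graph is finally read off as $\transpose\onevector$ applied to Hadamard powers of a colour-encoding vector, i.e.\ as r-$G_{\cur L_1}$ sentences, which closes the argument.
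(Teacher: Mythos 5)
Your proposal is correct in substance but proceeds quite differently from the paper. The paper's proof is a syntactic grammar reduction: it uses Proposition~\ref{prop:statereachML1} to set up a full typed CFG $G_{\cur L_1}$ with variables $S,V_c,V_r,M$ satisfying $L(G_{\cur L_1})=\ML{\cur L_1}$, then deletes rules one at a time (drop $S$ because scalars only arise from vector contractions; drop $MM$ and $(V_c)(V_r)$ by associativity, since every matrix is eventually applied to a $V_c$; drop $\transpose{(M)}$ because $A$ and $\diag{V_c}$ are symmetric), claiming each deletion preserves distinguishing power. You instead prove a two-sided semantic equivalence: the easy inclusion via typing of r-$G_{\cur L_1}$ derivations, and the hard inclusion by a structural induction whose invariant expresses every $\ML{\cur L_1}$ sub-expression as an r-$G_{\cur L_1}$ object up to scalar coefficients that are themselves recoverable as r-$G_{\cur L_1}$ readouts (via $\transpose u v=\transpose\onevector(\diag u v)$ and $(v\cdot w)x=(w\cdot x)\,v$), with a direct $\WL 1$-simulation by moment vectors as a fallback. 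The trade-off is instructive: the paper's reduction is shorter and is reused verbatim for Theorem~\ref{thm : ML3reduced}, but the step removing $(V_c)(V_r)$ ``by associativity'' silently produces exactly the scalar factors whose recoverability you flag as the delicate point --- your invariant (i)--(iv) is precisely the bookkeeping that makes that step rigorous, so your version is more explicit where the paper is terse. Your alternative route through direct $\WL 1$ simulation is also sound but heavier, and it re-proves part of the Geerts equivalence rather than leaning on it; either variant closes the argument.
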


\begin{proof}
Proposition \ref{prop:statereachML1sup} leads to only four variables. $M$ for the square matrices, $V_c$ for the column vectors, $V_r$ for the row vectors and $S$ for the scalars. We define a CFG $G_{\cur L_1}$ where the rules of a given variable is every possible operation in $\ML{\cur L_1}$ that produce this variable:  
\begin{align}\label{eq:CFGML1}
S &\vers (V_r)(V_c) \ | \ \diag S \ | \ SS   \\
V_c &\vers MV_c \ | \ \transpose{(V_r)} \ | \ V_c S \ | \ \onevector \nonumber \\
V_r &\vers V_r M \ | \ \transpose{(V_c)} \ | \ S V_r \nonumber \\
M &\vers MM \ | \ \transpose{(M)} \ | \ \diag {V_c} \ | \ (V_c)(V_r) \ | \ A  \nonumber 
\end{align}
As any operation in the rules of $G_{\cur L_1}$ belongs to $\cur L_1$, it is clear that $L(G_{\cur L_1}) \inclu \ML {\cur L_1} $. For the other inclusion, a simple inductive proof on the maximal number of rules shows that any sentence produced by $\ML {\cur L_1}$ can be derived from $G_{\cur L_1}$. We have then $\ML {\cur L_1} = L(G_{\cur L_1})$. 
For any scalar $s,s'$, since $\diag s$ and $s\cdot s'$ produce a scalar, the only way to produce a scalar from another variable is to pass through a vector dot product. It implies that to generate scalars, we only need to be able to generate vectors. We can then reduce $G_{\cur L_1}$ by removing the scalar variable $S$ and setting  $V_c$ as starting variable.
\begin{align*}
V_c &\vers MV_c \ | \ \transpose{(V_r)} \ | \ \onevector \\
V_r &\vers V_r M \ | \ \transpose{(V_c)} \\ 
M &\vers MM \ | \ \transpose{(M)} \ | \ \diag {V_c} \ | \ (V_c)(V_r) \ | \ A
\end{align*}
To ensure that the start variable is $V_c$, a mandatory subsequent operation will be $MV_c$ for any matrix variable $M$. As a consequence, by associativity of the matrix multiplication, $MM$ and $(V_c)(V_r)$ can be removed from the rule of $M$.

\begin{align*}
V_c &\vers MV_c \ | \ \transpose{(V_r)} \ | \ \onevector \\
V_r &\vers V_r M \ | \ \transpose{(V_c)}  \\
M &\vers \transpose{(M)} \ | \ \diag {V_c} \ | \ A 
\end{align*}
 Since $\mathrm{diag}$ produces symmetric matrices and $A$ is symmetric, $\transpose{(M)}$ does not play any role here. As a consequence, we can then focus on the column vector and we obtain r-$G_{\cur L_1}$.
\end{proof}

Figure \ref{fig:fullgraph} shows how the CFG $G_{\cur L_1}$ produces the sentence $\transpose\onevector A\onevector$.
 
\begin{figure}[h]
\centering
\includegraphics[scale=.3]{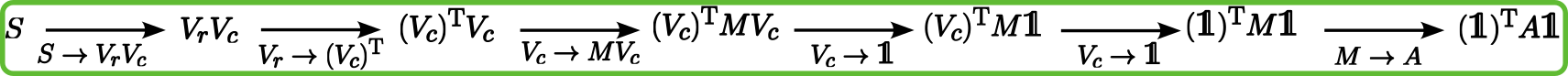}
\caption{$G_{\cur L_1}$ generating the sentence $\transpose\onevector A\onevector$. From the starting variable, Variables are replaced by applying rules until only terminal symbols remain.}
\label{fig:fullgraph}
\end{figure}

\subsection{Proofs of section 3}\label{subsec:proofl3}

This subsection is dedicated to proof of propositions and theorem of section 3.

Firstly, we prove proposition \ref{prop:ML3state}.

\begin{proof}
Since $\cur L_1 \inclu \cur L_3$, we only need to check the rule associated with the matrix Hadamard product can produce. Let $M \dans \R^{n \times n}$ and $N \dans \R^{n \times n}$ be words $\ML{\cur L_3}$ can produce, we have $M \odot N \dans \R^{n \times n}$. We can conclude.
\end{proof}

Secondly, we prove proposition \ref{prop:vectorodot}.

\begin{proof}
Let $M$ be a square matrix, $V_c,V_r$ be respectively column and row vectors, we have for any $i,j$,
\begin{align*}
\left( M \odot (V_c\cdot V_r) \right)_{i,j}  &=  M_{i,j} (V_c \cdot V_r)_{i,j} \\
 &= (V_c)_i M_{i,j} (V_r)_j \\
 &= \sum_l \diag {V_c} _{i,l} M_{l,j}(V_r)_j \\
 &= (\diag {V_c} M)_{i,j} (V_r)_j \\
 &=\sum_l (\diag {V_c} M)_{i,l} \diag {V_r}_{l,j} \\
 &= (\diag {V_c} M \diag {V_r})_{i,j}
\end{align*}
We only use the scalar product commutativity here.
\end{proof}

Eventually, we prove theorem \ref{thm: exhaustivegl3}.

\begin{proof}
As any operation in the rules of $G_{\cur L_3}$ belongs to $\cur L_3$, it is clear that $L(G_{\cur L_3}) \inclu \ML {\cur L_3} $.

Let $k$ be a positive integer, we denote by $M_{\cur L_3}^k$, $Vc_{\cur L_3}^k$, $Vr_{\cur L_3}^k$ and $S_{\cur L_3}^k$ the set of matrices, column vectors, row vectors and scalars that can be produce with at most $k$ operation in $\cur L_3$ from $A$. We also denote by $M_{G}^k$, $Vc_{G}^k$, $Vr_{G}^k$ and $S_{G}^k$ the set of matrices, column vectors, row vectors and scalars that can be produce with at most $k$ rules applied in $G_{\cur L_3}$.

For $k = 0$, we have $Vc_{\cur L_3}^0= Vr_{\cur L_3}^0 = S_{\cur L_3}^0 = \emptyset$, and thus $Vc_{\cur L_3}^0 \inclu Vc_{G}^0$, $Vr_{\cur L_3}^0\inclu Vr_{G}^0$ and $S_{\cur L_3}^0\inclu S_{G}^0$. Moreover $M_{\cur L_3}^0=\{A\}$ and $M_{G}^0=\{A\}$. 

Let suppose that there exists $k\geqslant 0$ such that $M_{\cur L_3}^k \inclu M_{G}^k$, $Vc_{\cur L_3}^k \inclu Vc_{G}^k$, $Vr_{\cur L_3}^k\inclu Vr_{G}^k$ and $S_{\cur L_3}^k\inclu S_{G}^k$. Then since $G_{\cur L_3}$ rules is composed of the exhaustive operations in $\cur L_3$, we have that $M_{\cur L_3}^{k+1} \inclu M_{G}^{k+1}$, $Vc_{\cur L_3}^{k+1} \inclu Vc_{G}^{k+1}$, $Vr_{\cur L_3}^{k+1}\inclu Vr_{G}^{k+1}$ and $S_{\cur L_3}^{k+1}\inclu S_{G}^{k+1}$
By induction, we have that $L(G_{\cur L_3}) \inclu \ML{\cur L_3}$ and we can conclude that $L(G_{\cur L_3}) =\ML{\cur L_3}$.
\end{proof}

\subsection{CFG to describe existent architectures}\label{subsec:cfgarch}

The following examples show how CFG can be used to characterise GNNs.
\begin{prop}[GCN CFG]\label{prop : gcncfg}
The following CFG, strictly less expressive than $\ML{\cur L_1}$, represents GCN (\cite{kipf2016semi})
\begin{align}\label{eq : gcncfg}
V_c &\vers   C V_c \ | \ \onevector
\end{align}
where $C = \diag{(A+I)\onevector}^{-\frac 1 2}(A+I)\diag{(A+I)\onevector}^{-\frac 1 2}$
\end{prop}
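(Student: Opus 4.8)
The plan is to split the statement into the two assertions it bundles: that the one-rule grammar of~\eqref{eq : gcncfg}, call it $G_{\mathrm{GCN}}$, represents GCN, and that it is \emph{strictly} weaker than $\ML{\cur L_1}$. For the first, I would unwind $G_{\mathrm{GCN}}$: every derivation has the shape $V_c \implies CV_c \implies C^2V_c \implies \dots \implies C^{k}V_c \implies C^{k}\onevector$, so its language is $\enstq{C^{k}\onevector}{k \geq 0}$. Recalling the GCN layer of \cite{kipf2016semi}, which sends a node signal $H$ to $\sigma\!\left(CHW\right)$ with $C = \diag{(A+I)\onevector}^{-1/2}(A+I)\diag{(A+I)\onevector}^{-1/2}$ the terminal of the grammar, $W$ a learnable channel mixing and $\sigma$ a pointwise nonlinearity, the CFG-to-GNN dictionary of Subsection~\ref{subsec:ML1} (variables $\leftrightarrow$ layer signals, rules $\leftrightarrow$ update/readout equations, terminals $\leftrightarrow$ model inputs) identifies the rule $V_c \vers CV_c$ with this update once $W$ and $\sigma$ are abstracted away, and $V_c \vers \onevector$ with the constant node-feature input; hence a depth-$k$ derivation mirrors a depth-$k$ GCN forward pass, which is what representing GCN amounts to.

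For the bound $L(G_{\mathrm{GCN}}) \preceq \ML{\cur L_1}$, I note that GCN is a message-passing network and is therefore upper-bounded by $\WL 1$ by \cite{morris2019wl,xu2018powerful}; to keep the argument self-contained I would instead check directly that the signals $C^{k}\onevector$ are refined by the stable $\WL 1$ colouring, by induction on $k$. Two nodes $u$ (in $\cur G_1$) and $u'$ (in $\cur G_2$) with equal $\WL 1$ colour have equal degree and matching closed-neighbourhood colour multisets, and $C_{uv}$ depends only on whether $u,v$ are adjacent and on $\deg u,\deg v$; so $(C^{k+1}\onevector)_u = \sum_v C_{uv}\,(C^{k}\onevector)_v$ is, by the inductive hypothesis, a function of the $\WL 1$ colour of $u$ alone. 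Consequently any permutation-invariant readout of $L(G_{\mathrm{GCN}})$ is a function of the multiset of $\WL 1$ colours, so if $G_{\mathrm{GCN}}$ distinguishes two graphs then $\WL 1$ does, hence (by the equivalence of \cite{Geerts}) so does some sentence of $\ML{\cur L_1}$.

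It remains to make the inequality strict, for which one witnessing pair suffices: the $4$-cycle $C_4$ and the disjoint union $2K_2$ of two edges, both on $4$ nodes. On any $d$-regular graph $\diag{(A+I)\onevector} = (d+1)\identite$, so $C = \tfrac{1}{d+1}(A+I)$ and $C\onevector = \tfrac{1}{d+1}\left(A\onevector + \onevector\right) = \onevector$, whence $C^{k}\onevector = \onevector$ for all $k$. Thus on the $2$-regular $C_4$ and on the $1$-regular $2K_2$ every element of the grammar's language is the all-ones vector and every readout agrees, whereas the sentence $\transpose{\onevector}\,A\,\onevector \dans \ML{\cur L_1}$ separates them ($8$ against $4$). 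Combining the three parts proves the proposition.

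The one genuinely delicate point is the representation claim: representing GCN is a statement about a trainable architecture rather than a purely formal object, so the argument has to be tied to the informal CFG-to-GNN correspondence already fixed in the paper — in particular to the convention that the grammar suppresses the weight matrices, the nonlinearity and the channel count. Once that convention is in place, the $\preceq$ direction is the textbook MPNN-versus-$\WL 1$ bound and the strict separation is the short regular-graph computation above, so neither poses an obstacle.
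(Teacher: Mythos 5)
Your proposal is correct, and it is in fact more complete than the paper's own justification, which consists solely of the informal paragraph following the proposition. Your treatment of the representation claim coincides with the paper's: the rule $V_c \vers CV_c$ is the message passing with support $C$, and the weight matrices, the nonlinearity and the linear channel mixing are suppressed because $+$, $\times$ and $f$ do not change the expressive power of the language. Where you diverge is on the two halves of the strictness claim. For the upper bound, the paper's (implicit) argument is more direct than yours: the support $C$ is itself a word of $\ML{\cur L_1}$ up to the expressiveness-neutral operations $+$ and $f$ (note $\identite = \diag\onevector$, and $x \mapsto x^{-1/2}$ is a pointwise $f$), so every $C^k\onevector$ is already expressible in $\ML{\cur L_1}$ and no detour through the MPNN-versus-$\WL 1$ bound is needed; your induction on the stable colouring is valid but re-proves a containment that holds essentially syntactically, at the price of invoking the $\WL 1$ equivalence of \cite{Geerts} twice. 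For strictness, the paper offers no argument at all, and your regular-graph computation ($\diag{(A+\identite)\onevector} = (d+1)\identite$ on a $d$-regular graph, hence $C\onevector = \onevector$ and $C^k\onevector = \onevector$ for all $k$, so the grammar cannot separate the $4$-cycle from $2K_2$, while the sentence $\transpose\onevector A \onevector \dans \ML{\cur L_1}$ returns $8$ versus $4$) is exactly the concrete witness the proposition needs; this part is a genuine and worthwhile addition rather than a deviation.
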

In GCN, the only grammatical operation is the message passing given by $CV_c$ where $C$ is the convolution support. The other parts of the model are linear combinations of vectors and MLP, that correspond to $+$,$\times$, and $f$ in the language. Since $+$,$\times$, and $f$ do not affect the expressive power of the language (\cite{Geerts}), they do not appear in the grammar. Actually, any MPNN based on $k$ convolution support $C_i$ included in $\ML{\cur L_1}$ can be described by the following CFG which is strictly less expressive than $\ML{\cur L_1}$:
\begin{align}\label{eq : gcnmpnn}
V_c &\vers   C_1 V_c \ | \ \cdots \ | \ C_k V_c \ | \ \onevector
\end{align}
GNNML1 is an architecture provably $\WL 1$ equivalent (\cite{breakingthelimits}) with the following node update.
\begin{align}\label{eq:GNNML1}
H^{(l+1)} &= H^{(l)}W^{(l,1)} +AH^{(l)}W^{(l,2)} \\
 &\quad + H^{(l)}W^{(l,3)}\odot H^{(l)}W^{(l,1)}. \nonumber
\end{align}
Where $H^{(l)}$ is the matrix of node embedding at layer $l$ and $W^{(l,i)}$ are learnable weight matrices. For any vector $v,w$, since $\diag v w = v\odot w$, the following CFG that describes GNNML1 is equivalent to r-$G_{\cur L_1}$.
\begin{align}
V_c &\vers V_c \odot V_c \ | \ A V_c \ | \ \onevector
\end{align}

\paragraph{From r-$G_{\cur L_3}$ to MPNNs and PPGN}We have already shown that most MPNNs can be written with operations in r-$G_{\cur L_1}$, since $\cur L_1 \inclu \cur L_3$ it stands for r-$G_{\cur L_3}$. PPGN can also be written with  r-$G_{\cur L_3}$. Indeed, at each layer, PPGN applies the matrix multiplication on matched matrices on the third dimension, an operation included in r-$G_{\cur L_3}$. The node features are stacked on the third dimension as diagonal matrices, the $\mathrm{diag}$ operation is also included in r-$G_{\cur L_3}$. As all operations in PPGN are included, r-$G_{\cur L_3}$ generalises PPGN layer. Actually, the following CFG describes the PPGN layer :
\begin{align}\label{eq:reducedCFGPPGN}
M &\vers  MM \ | \ \diag {\onevector} \ | \ A 
\end{align}

\paragraph{2-IGN CFG}
\begin{figure}
\centering 
\includegraphics[scale=.2]{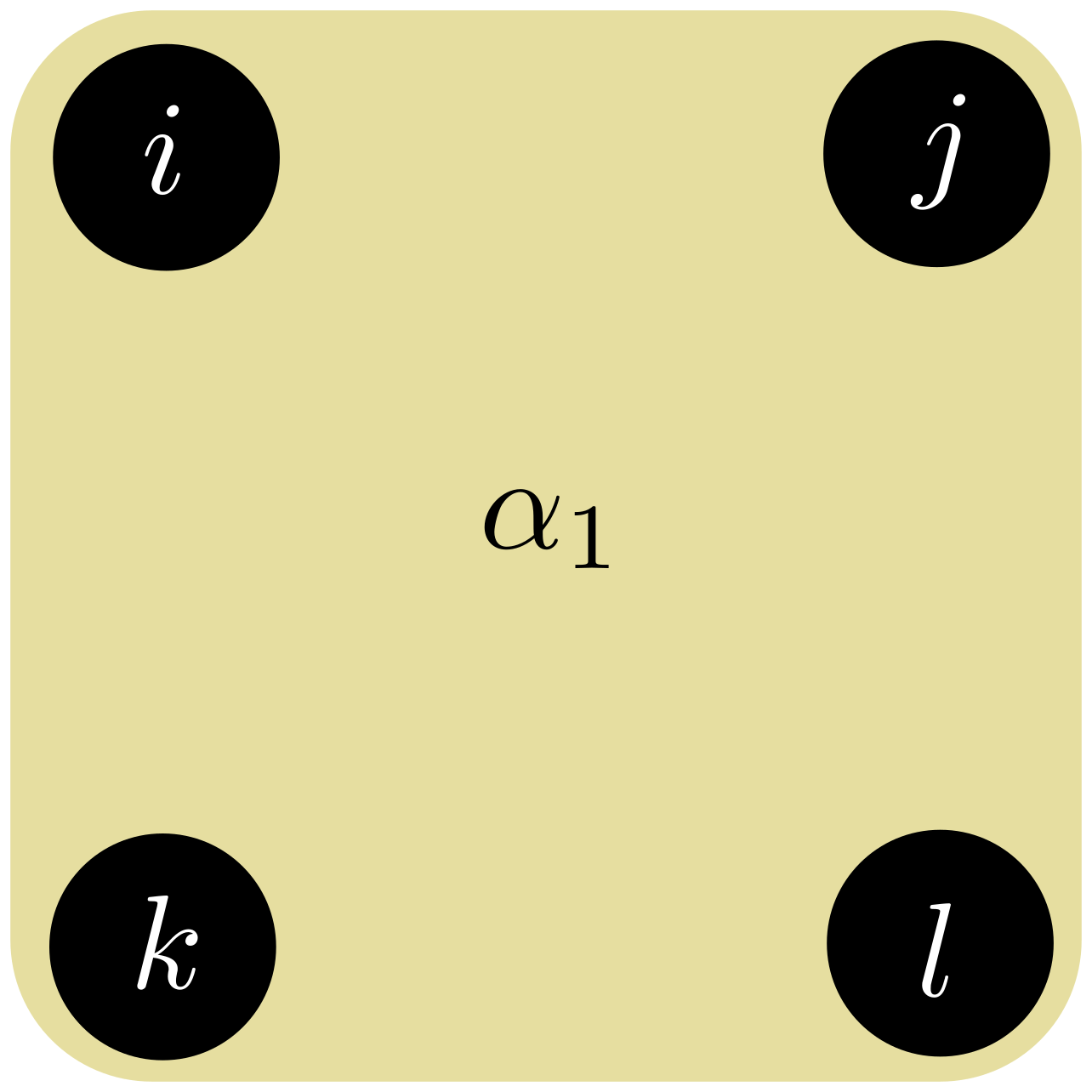}
\includegraphics[scale=.078]{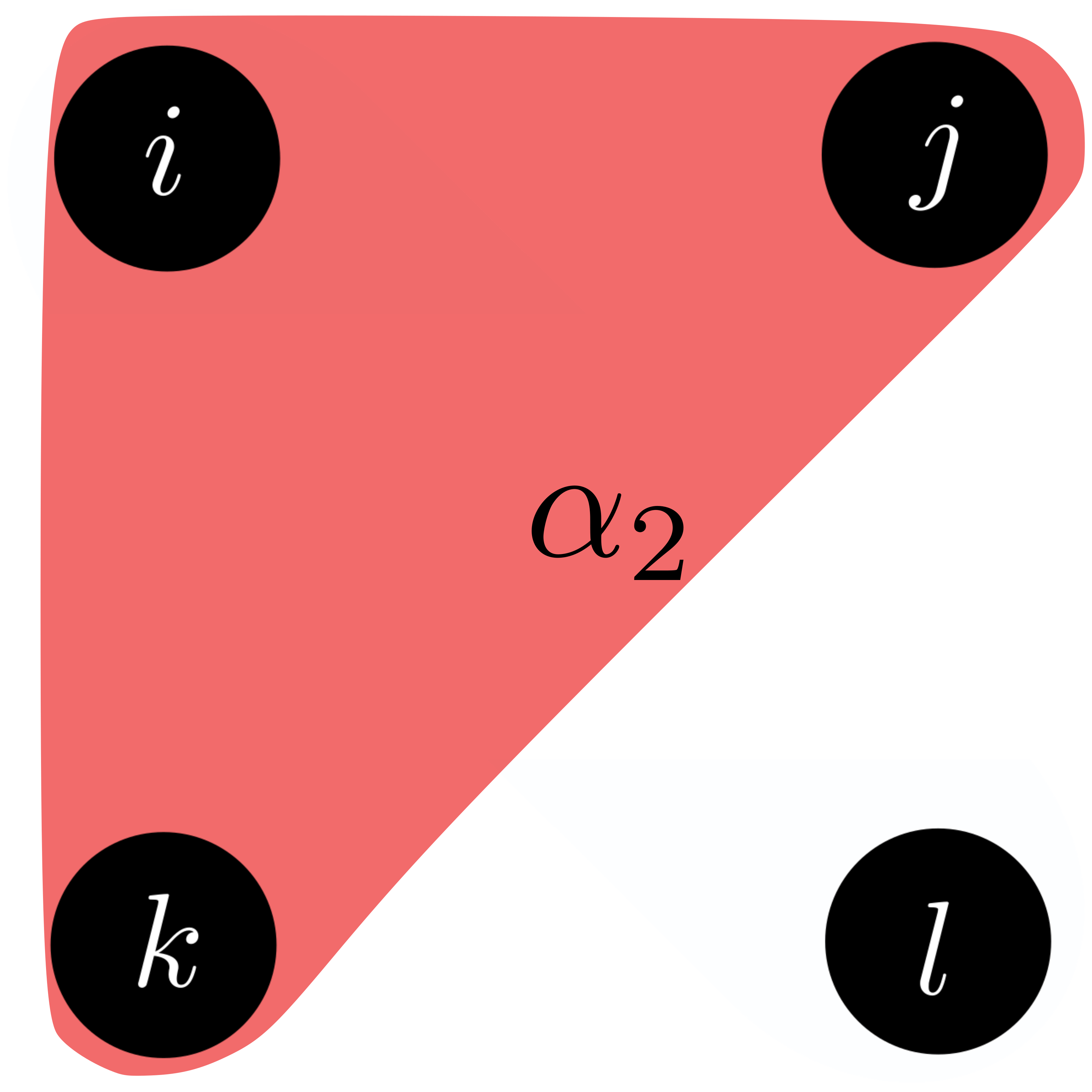}
\includegraphics[scale=.078]{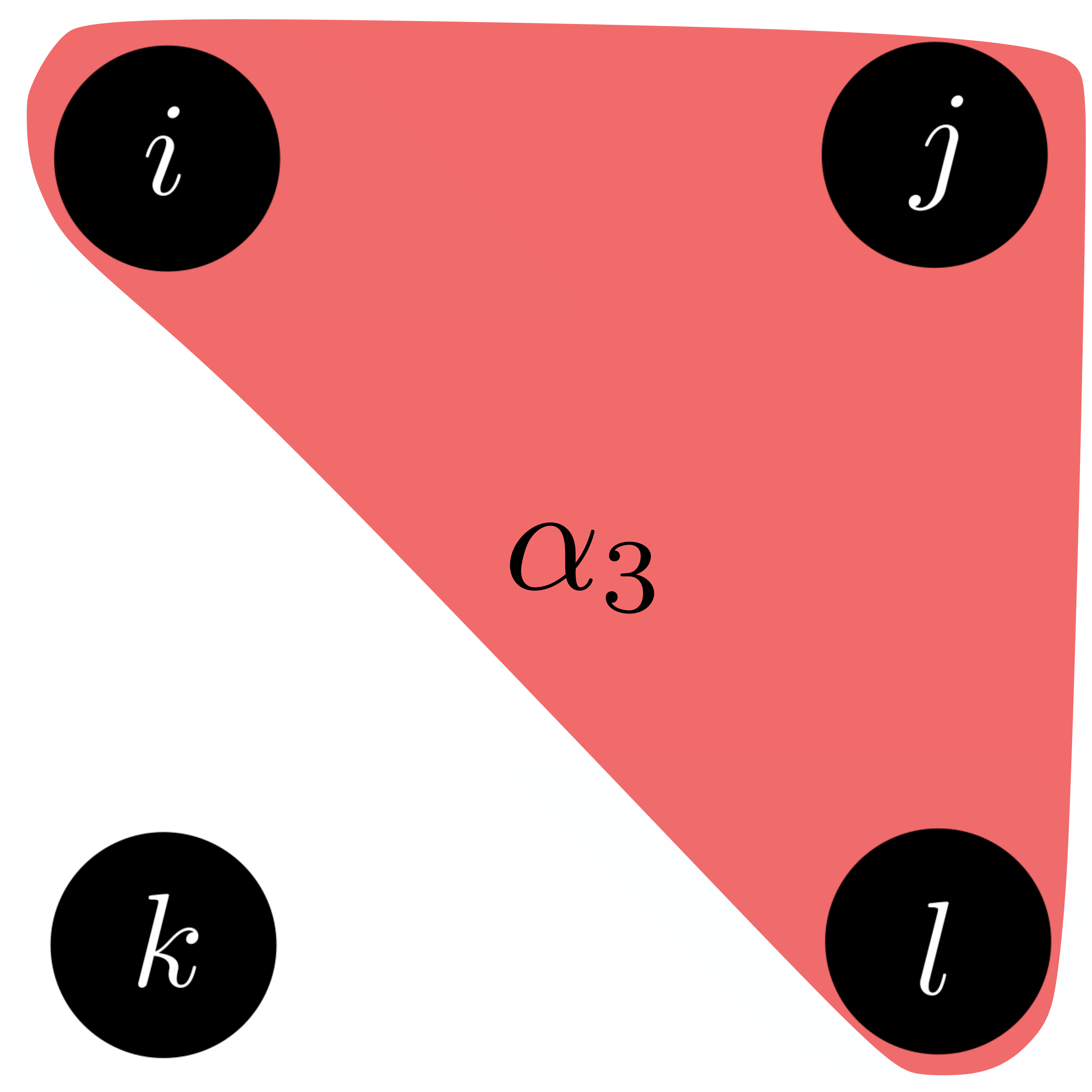}
\includegraphics[scale=.078]{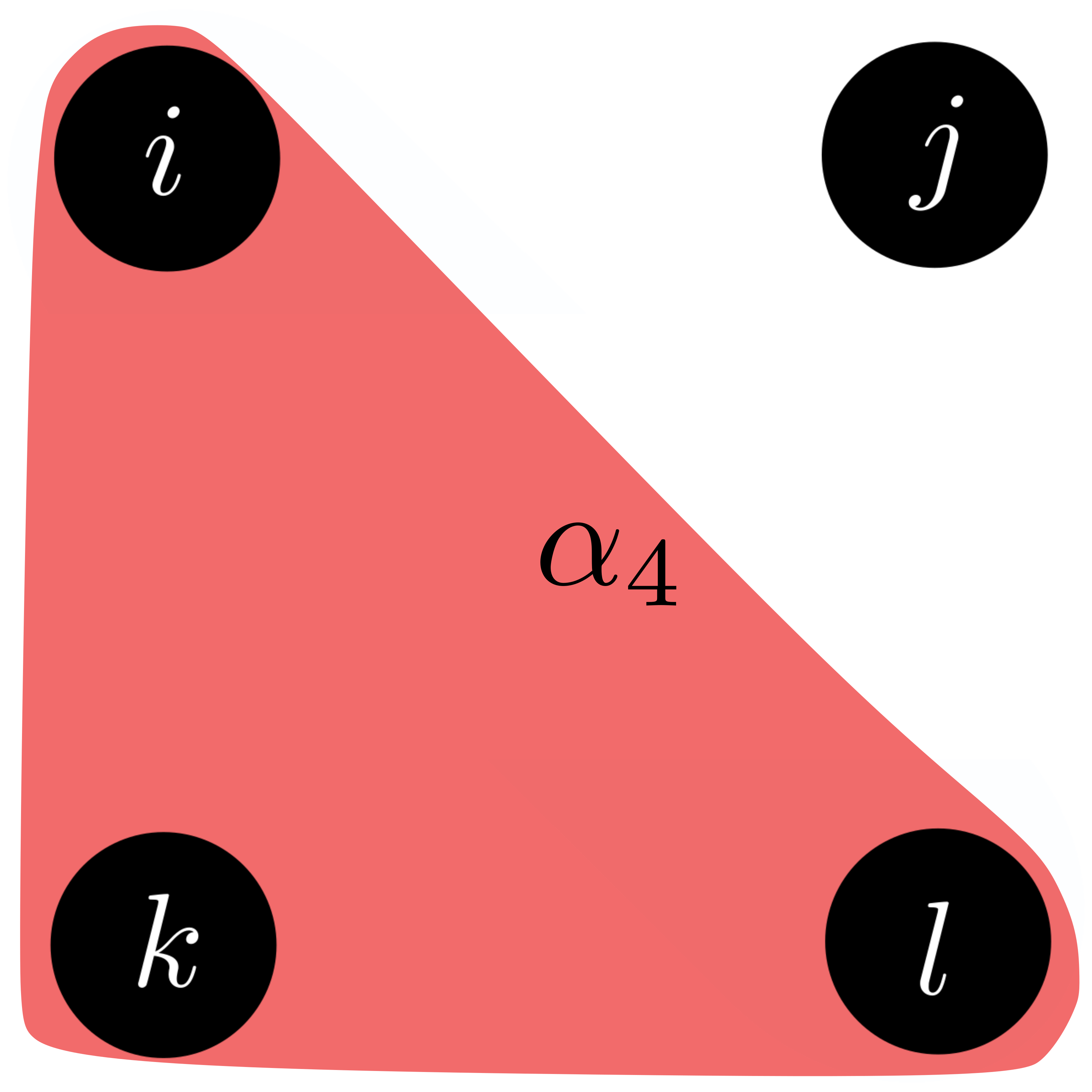}
\includegraphics[scale=.078]{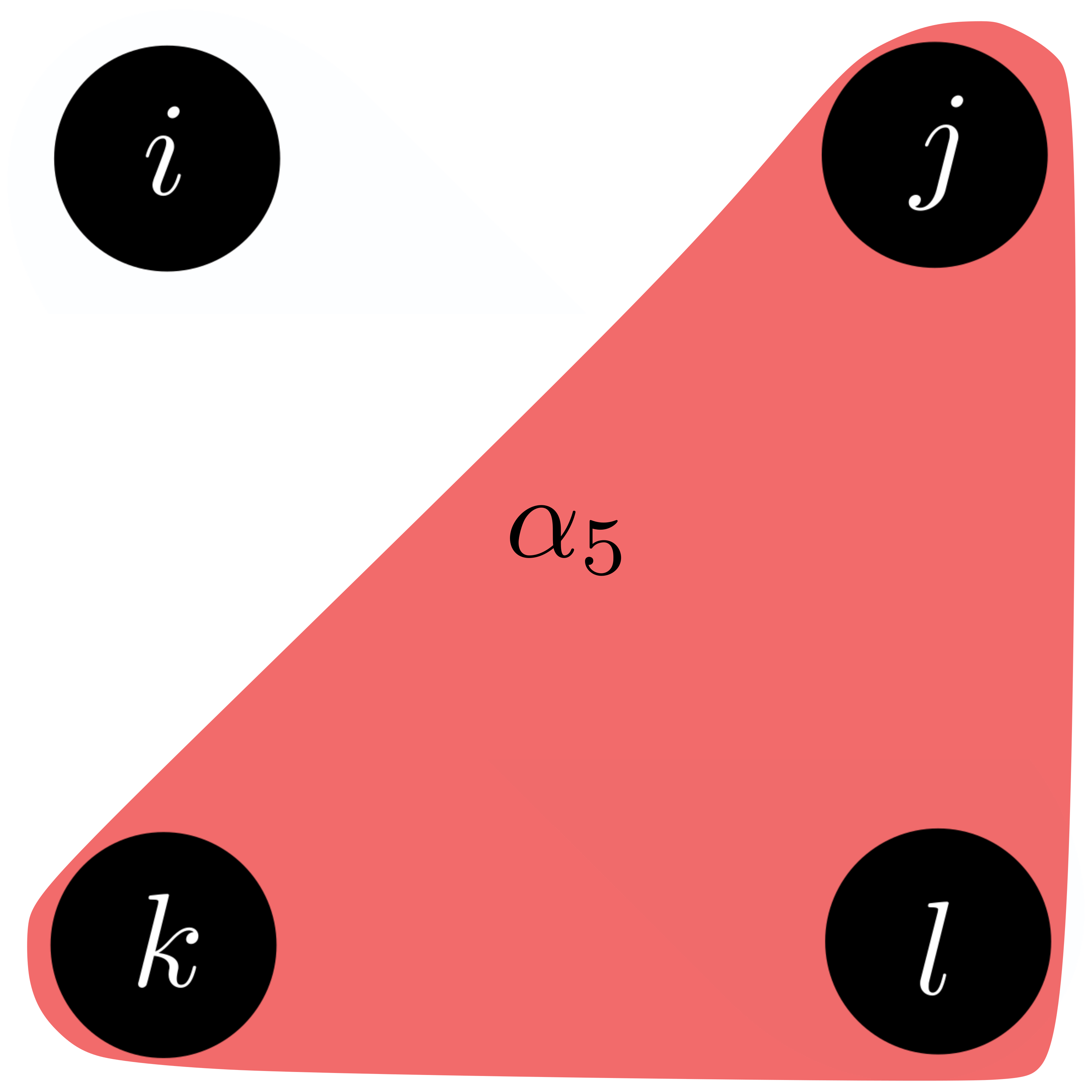}

\includegraphics[scale=.2]{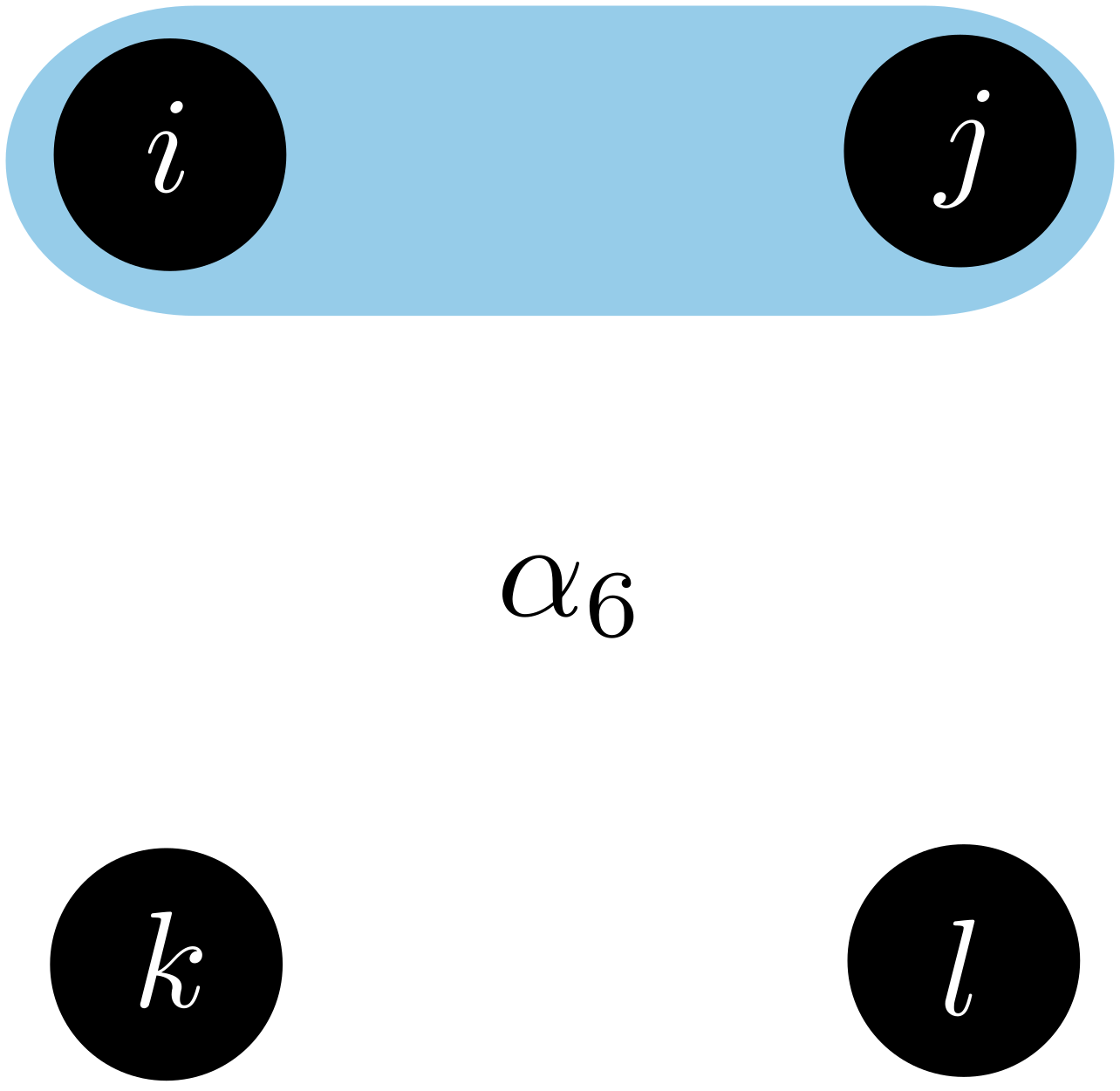}
\includegraphics[scale=.2]{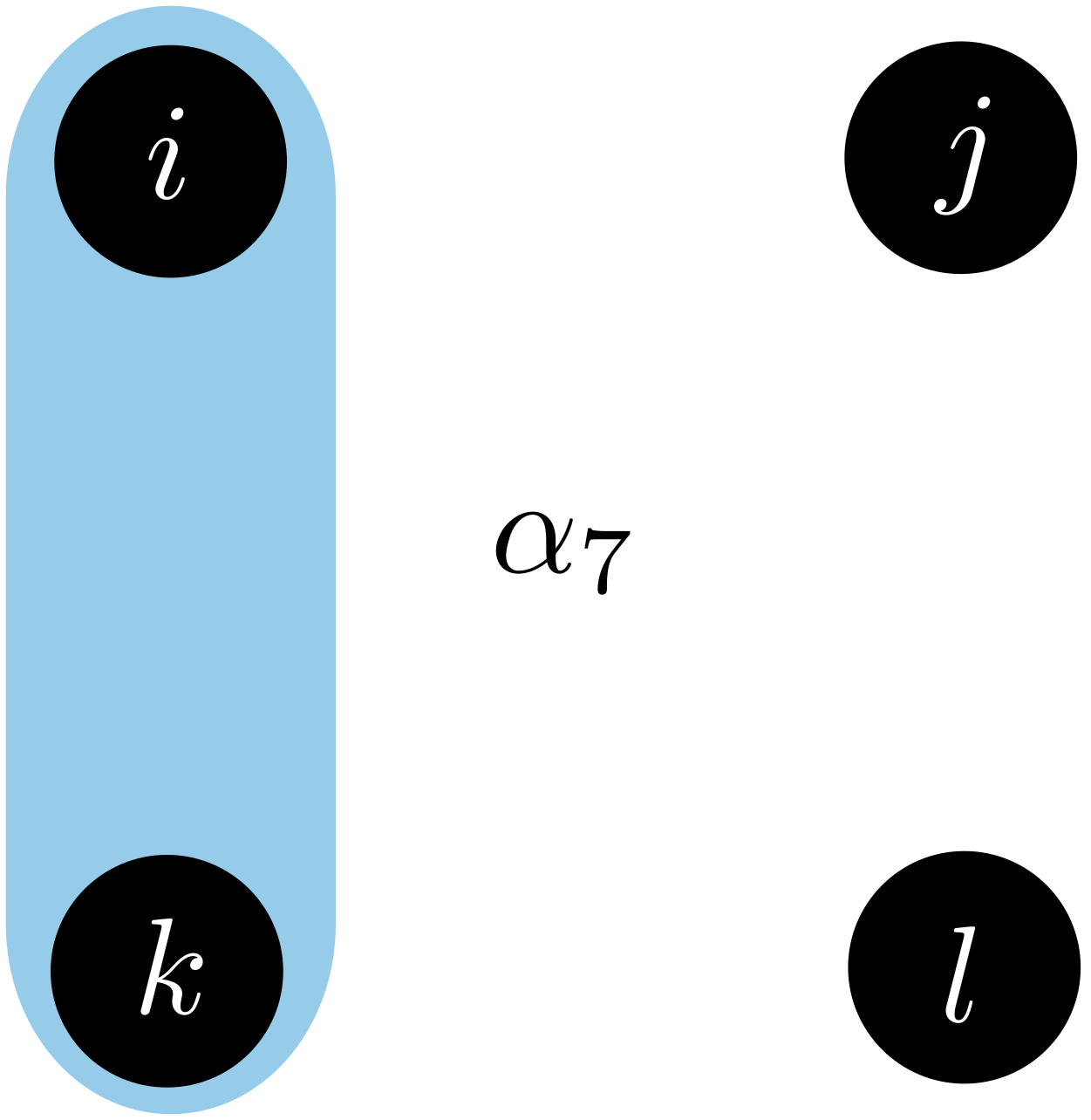}
\includegraphics[scale=.2]{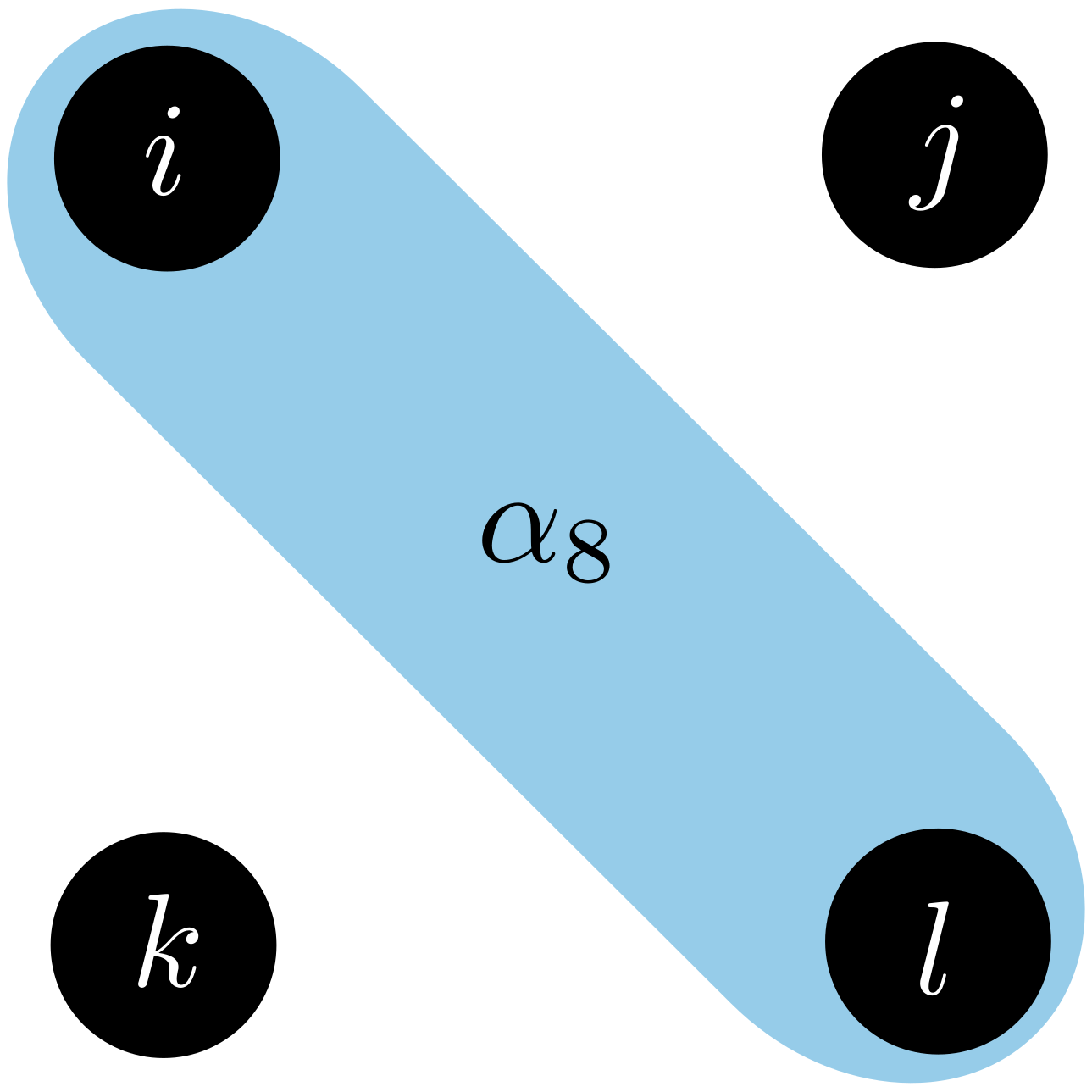}
\includegraphics[scale=.2]{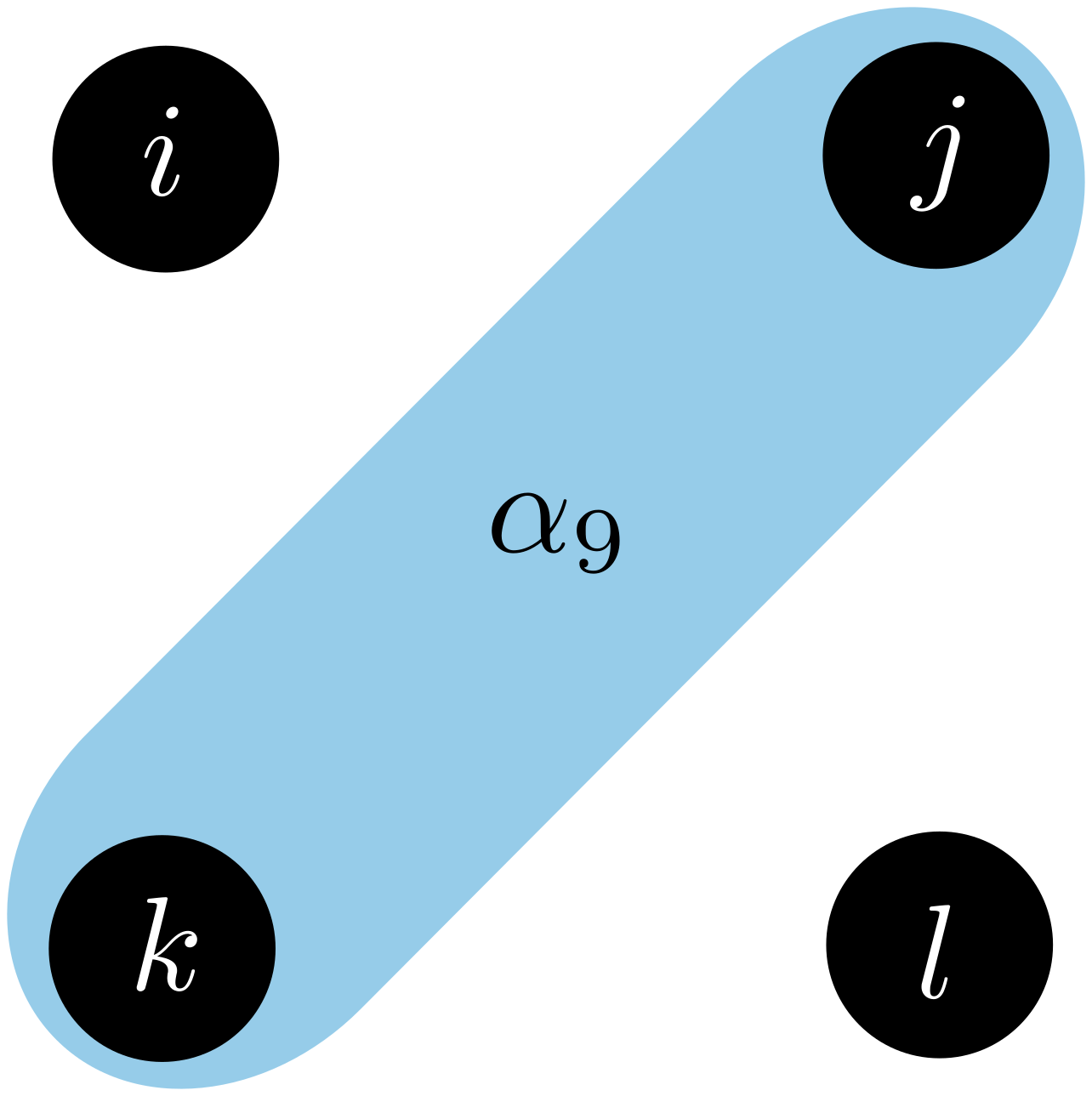}
\includegraphics[scale=.2]{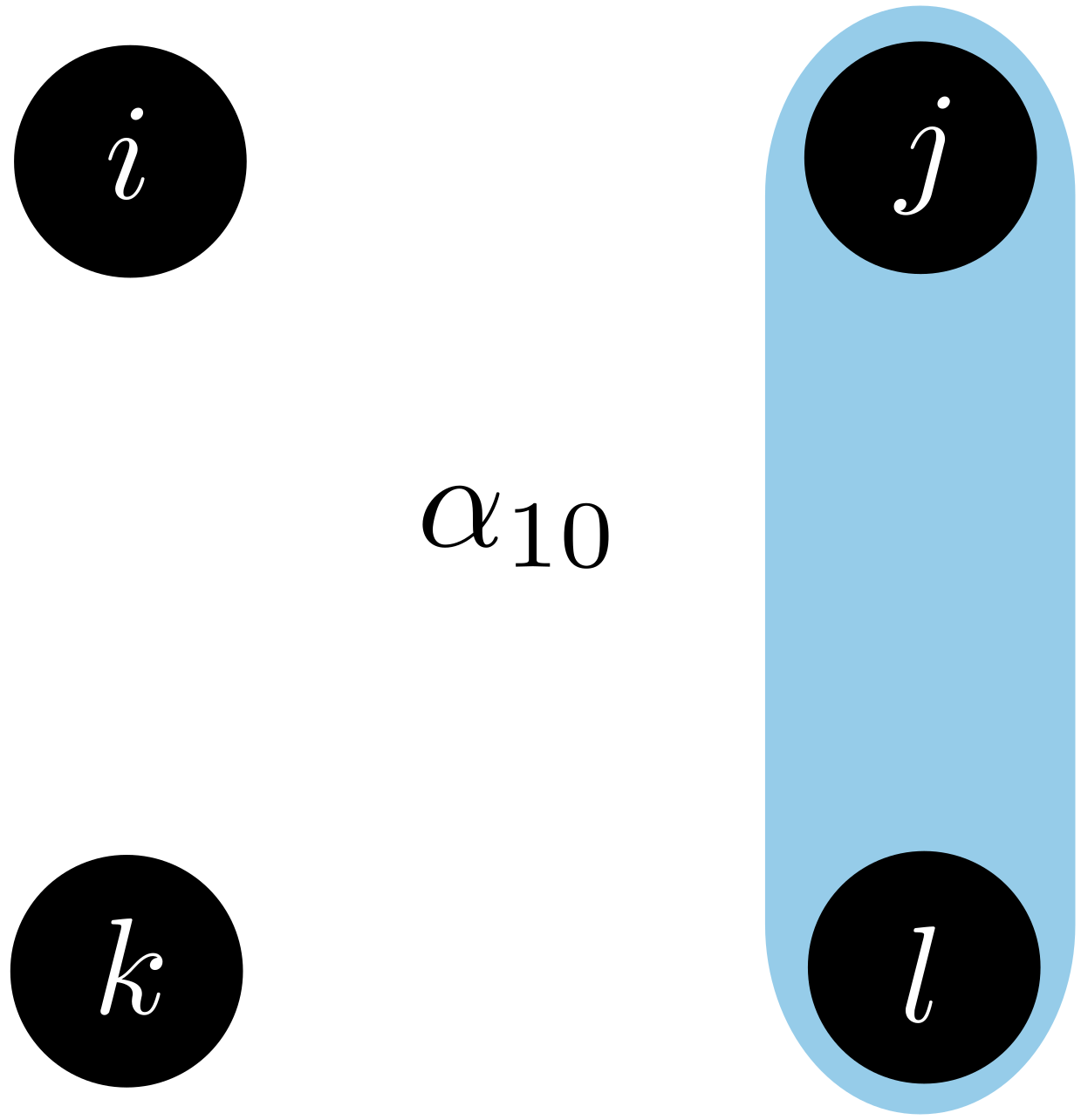}
\includegraphics[scale=.2]{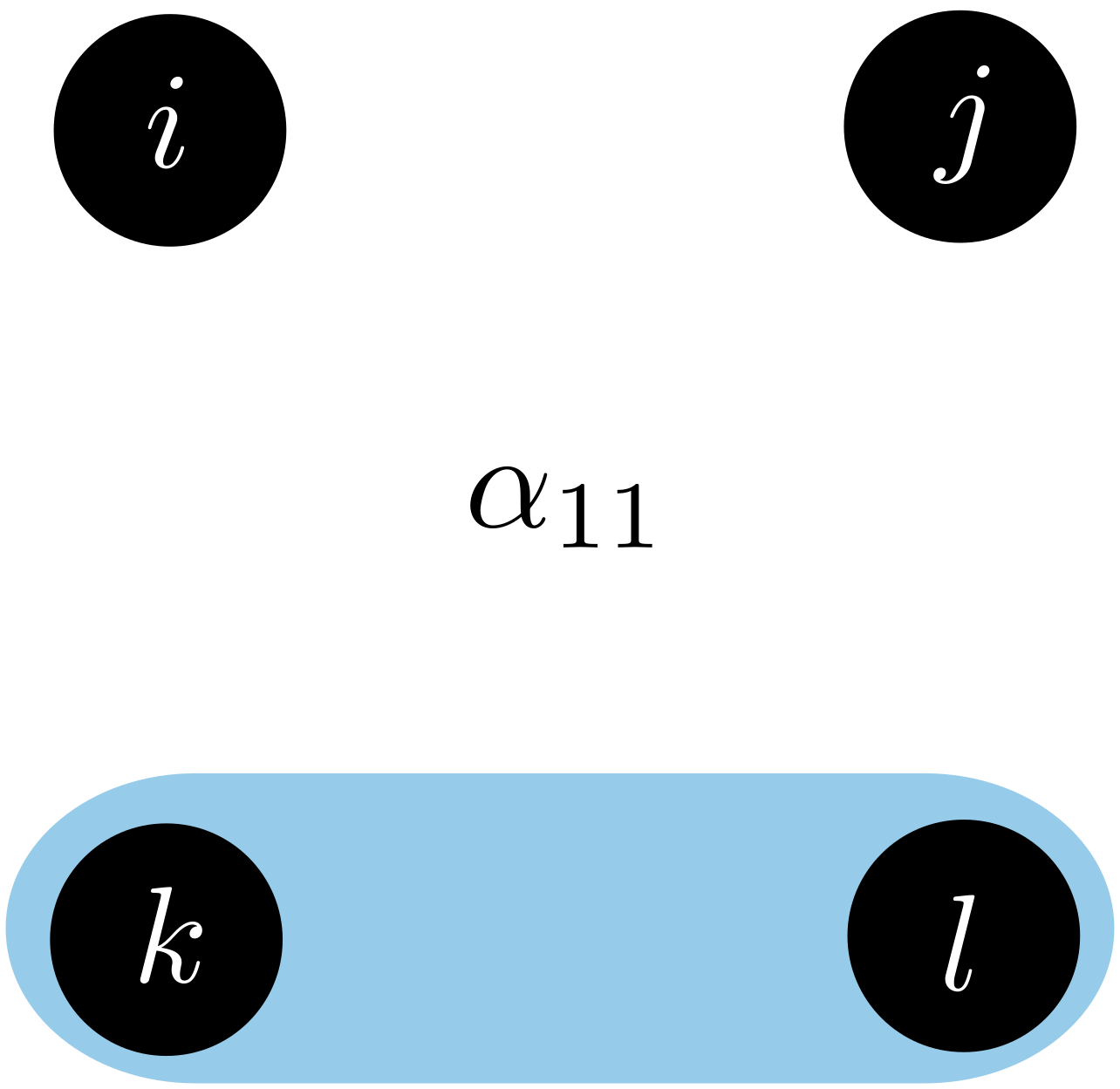}

\includegraphics[scale=.2]{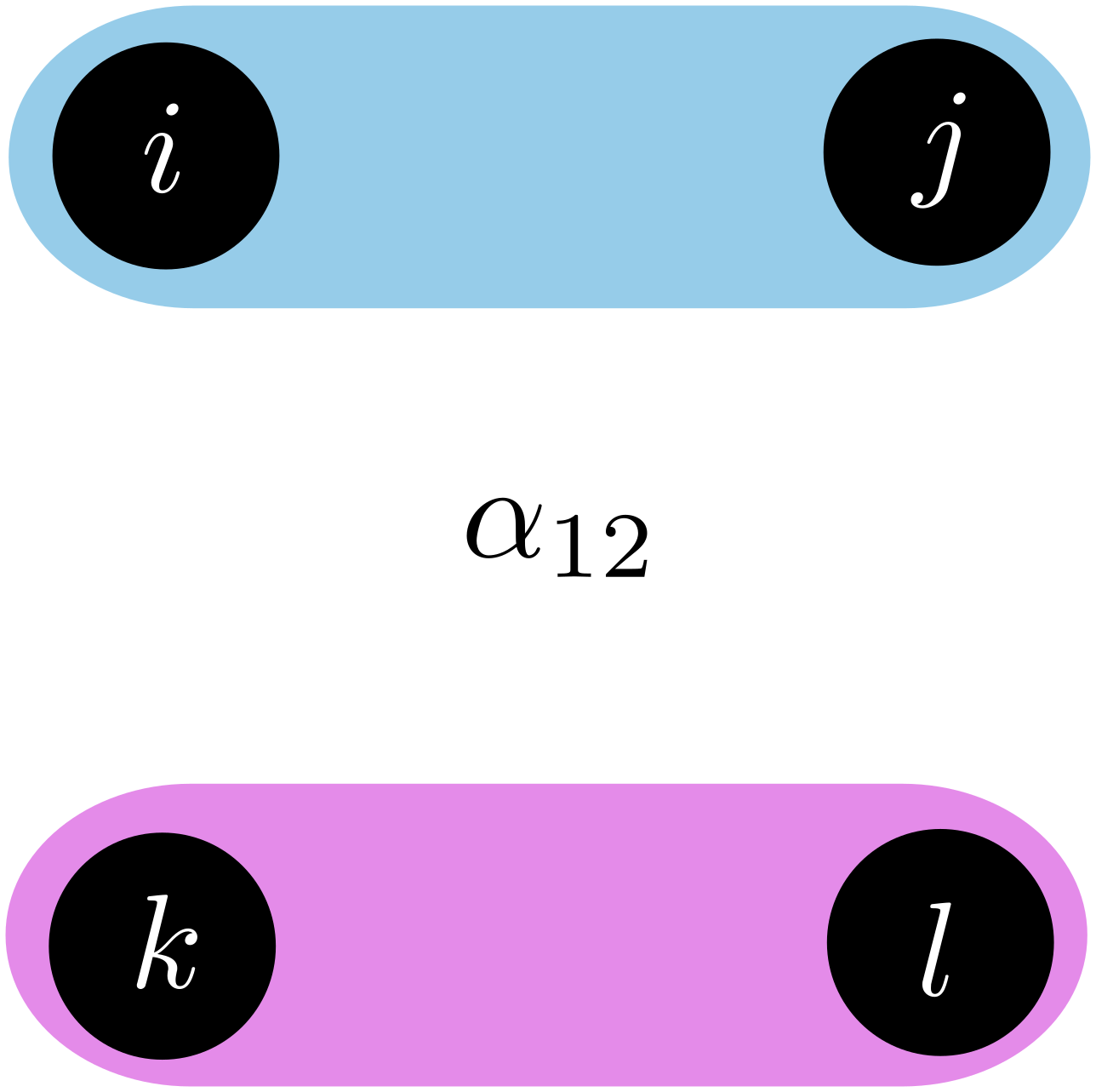}
\includegraphics[scale=.2]{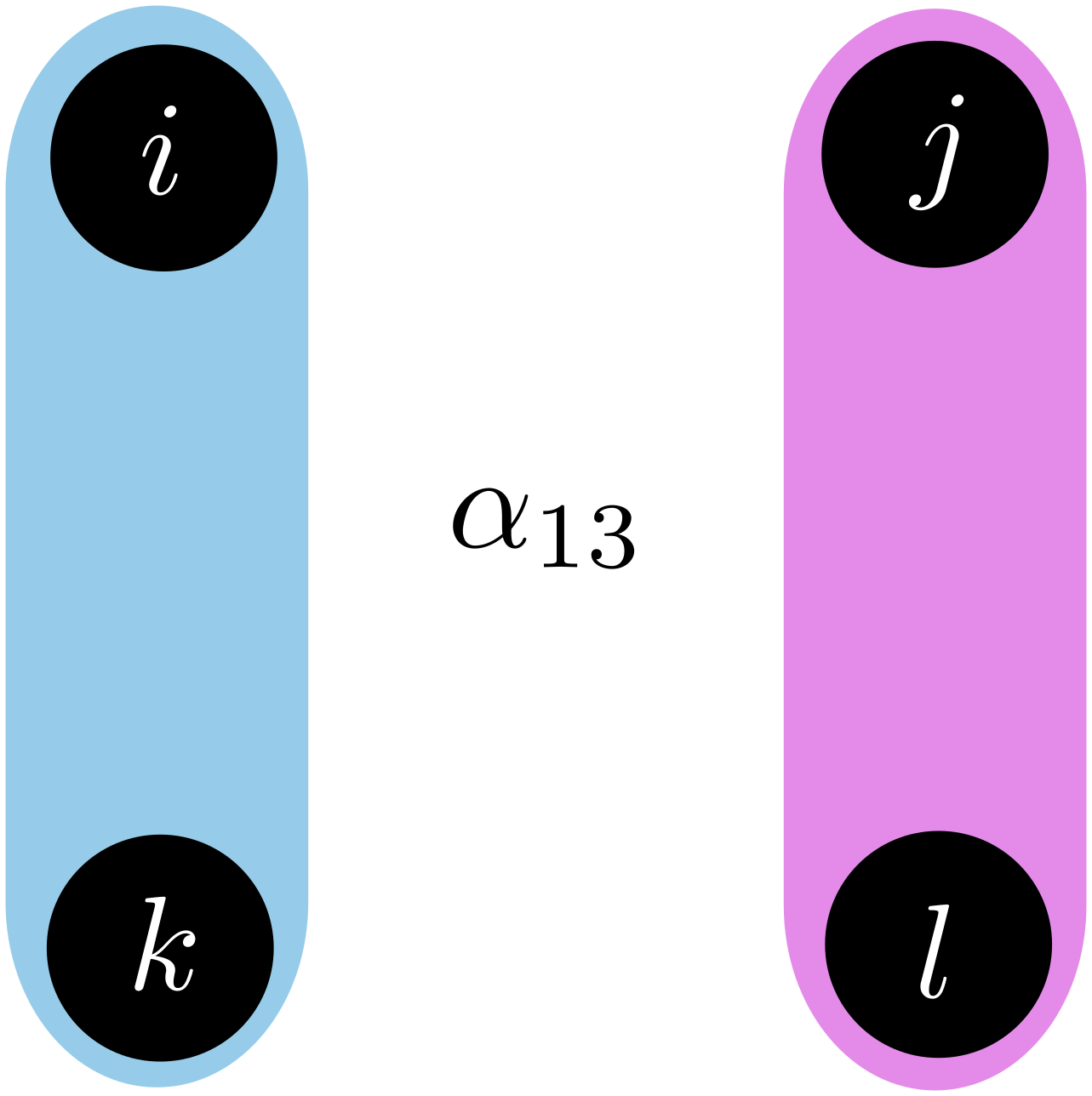}
\includegraphics[scale=.2]{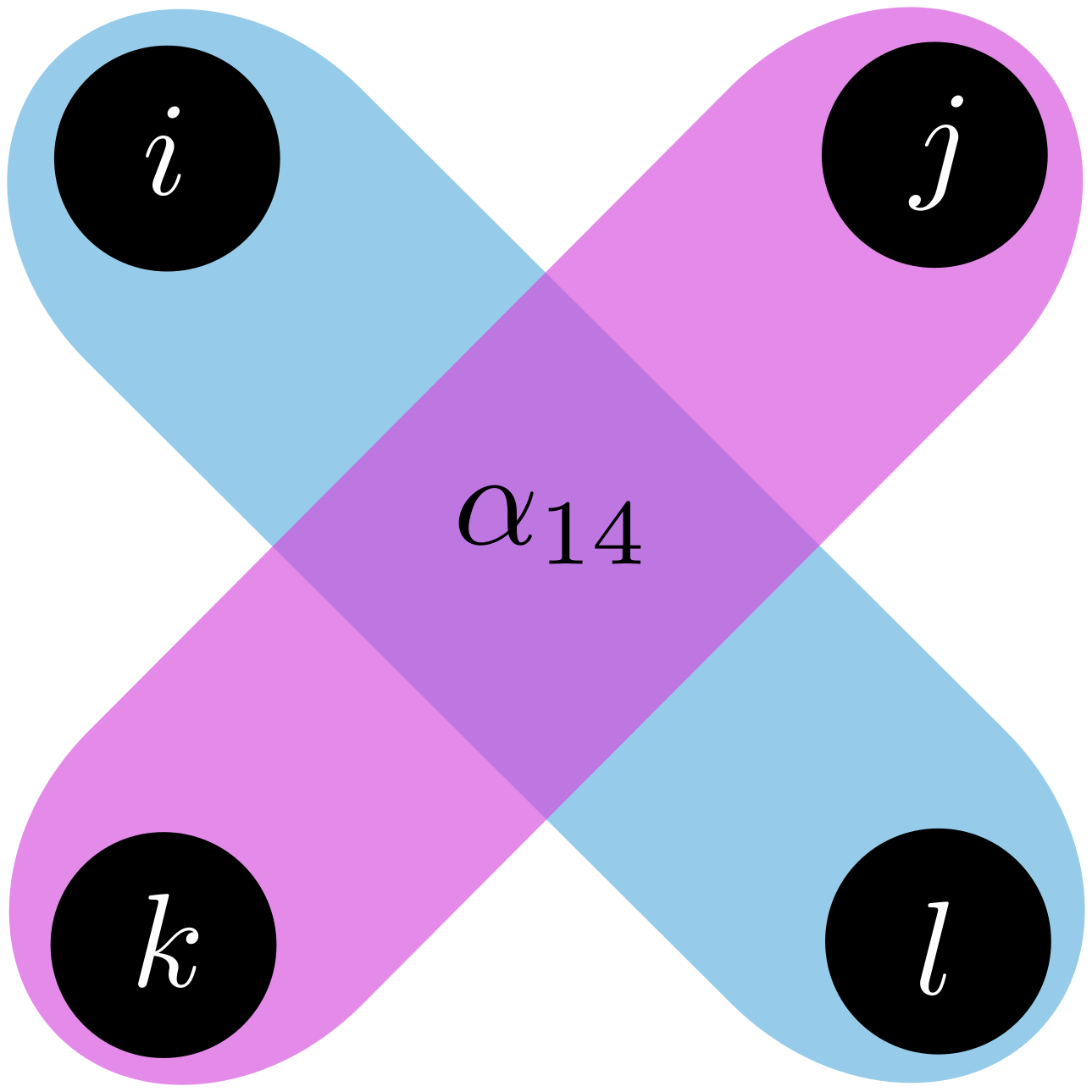}
\includegraphics[scale=.2]{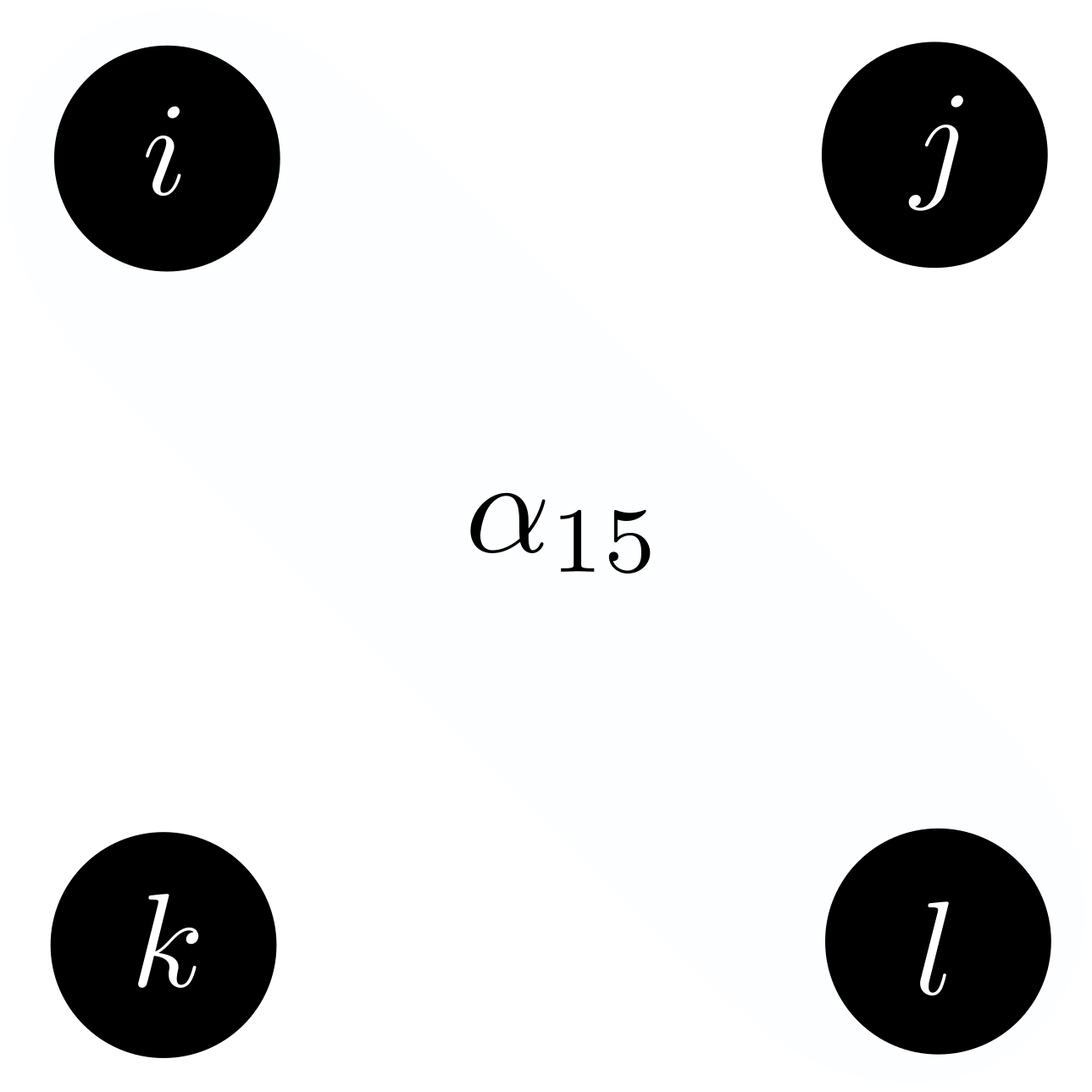}
\caption{Partition of four indices tuples.}
\label{fig : partition4tuple}
\end{figure}
For $p \dans \intervalleentier 1 {15}$, we define $\cur B_p \dans (\R^{n})^4$ as follow, $\cur B_p = \begin{cases} 1 \text{ if }(i,j,k,l) \dans \alpha_p, \\ 0 \text{ if not.} \end{cases} $. Where $(\alpha_p)$ corresponds to the 15 manners to partition four elements that can be seen in Figure \ref{fig : partition4tuple}.

As shown in \cite{maron2018invariant}, $(\cur B_p)$ is a basis of the set of equivariant linear operators from $(\R^{n})^2$ to $(\R^{n})^2$. For the proof in the paper, two isomorphisms $vec : (\R^n)^2 \vers \R^{n^2}$ and $mat : (\R^n)^4 \vers (\R^{n^2})^2 $ was defined for any tensor $T \dans (\R^n)^4$, matrices $ M \dans (\R^{n^2})^2$, $N\dans (\R^n)^2$ and vector $v\dans \R^{n^2}$.
\begin{align*}
mat(T)_{i,j} &= T_{i// n,i \% n,j//n,j \% n} \\
mat^{-1}(M)_{i,j,k,l} &= M_{in + j,kn + l} \\
vec(N)_i &= N_{i//n,i\% n} \\
vec^{-1}(v)_{i,j} &= v_{in + j}
\end{align*}

We can then define the binary operation $\tilde \cdot$ as follow
\begin{align*}
    T \tilde \cdot N = vec^{-1}(mat(T)vec(N))
\end{align*}
Actually, we obtain the following operation
\begin{align*}
    (T\tilde \cdot N)_{i,j}  = \sum_{k,l} T_{i,j,k,l}N_{k,l}
\end{align*}

We have all we need to proceed on writing 2-IGN as a grammar. The idea is to compute the basis operator to any matrices with a set of rules.

\begin{align*}
    (\cur B_1\tilde \cdot N)_{i,j} &= \sum_{k,l} (\cur B_1)_{i,j,k,l}N_{k,l} \\
        &= \begin{cases}
            N_{i,i} \text{ if } i=j,\\
            0 \text{ if not.}
        \end{cases}
\end{align*}
It is pretty easy to see that \begin{align*}
    \cur B_1\tilde \cdot N = N\odot \identite 
\end{align*}

\begin{align*}
    (\cur B_2\tilde \cdot N)_{i,j} &= \sum_{k,l} (\cur B_2)_{i,j,k,l}N_{k,l} \\
        &= \begin{cases}
            \sum_{l \neq i} N_{i,l} \text{ if } i=j,\\
            0 \text{ if not.}
        \end{cases}
\end{align*}
Here, it is a sum over the matrix line avoiding the diagonal. \begin{align*}
    \cur B_2\tilde \cdot N = \diag{(N\odot J)\onevector} 
\end{align*}

\begin{align*}
    (\cur B_3\tilde \cdot N)_{i,j} &= \sum_{k,l} (\cur B_3)_{i,j,k,l}N_{k,l} \\
        &= \begin{cases}
            \sum_{l \neq i} N_{l,i} \text{ if } i=j,\\
            0 \text{ if not.}
        \end{cases}
\end{align*}
Here, it is a sum over the matrix column avoiding the diagonal. \begin{align*}
    \cur B_3\tilde \cdot N = \diag{\transpose{(N\odot J)}\onevector} 
\end{align*}

\begin{align*}
    (\cur B_4\tilde \cdot N)_{i,j} &= \sum_{k,l} (\cur B_4)_{i,j,k,l}N_{k,l} \\
        &= \begin{cases}
             N_{j,j} \text{ if } i\neq j,\\
            0 \text{ if not.}
        \end{cases}
\end{align*}
It is the projection of the corresponding column diagonal element. \begin{align*}
    \cur B_4\tilde \cdot N = (\onevector \transpose \onevector (N\odot \identite))\odot J 
\end{align*}

\begin{align*}
    (\cur B_5\tilde \cdot N)_{i,j} &= \sum_{k,l} (\cur B_5)_{i,j,k,l}N_{k,l} \\
        &= \begin{cases}
             N_{i,i} \text{ if } i\neq j,\\
            0 \text{ if not.}
        \end{cases}
\end{align*}
It is the projection of the corresponding line diagonal element.\begin{align*}
    \cur B_5\tilde \cdot N = ( (N\odot \identite)\onevector \transpose \onevector)\odot J 
\end{align*}

\begin{align*}
    (\cur B_6\tilde \cdot N)_{i,j} &= \sum_{k,l} (\cur B_6)_{i,j,k,l}N_{k,l} \\
        &= \begin{cases}
            \sum_{l \neq k} N_{k,l} - \sum_{l} N_{i,l} -\sum_{l} N_{l,i} \text{ if } i = j,\\
            0 \text{ if not.}
        \end{cases}
\end{align*}
One can recognise $\cur B_2$ and $\cur B_3$. \begin{align*}
    \cur B_6\tilde \cdot N = (\onevector(N\odot J) \transpose \onevector ) \identite - \cur B_2 \tilde \cdot N - \cur B_3 \tilde \cdot N
\end{align*}

\begin{align*}
    (\cur B_7\tilde \cdot N)_{i,j} &= \sum_{k,l} (\cur B_7)_{i,j,k,l}N_{k,l} \\
        &= \begin{cases}
            \sum_{l \neq i} N_{i,l} - N_{i,j} \text{ if } i\neq j,\\
            0 \text{ if not.}
        \end{cases}
\end{align*}
It is just a sum over the line avoiding the element.
\begin{align*}
    \cur B_7\tilde \cdot N = (\onevector \transpose \onevector (N\odot J) - N)\odot J
\end{align*}

\begin{align*}
    (\cur B_8\tilde \cdot N)_{i,j} &= \sum_{k,l} (\cur B_8)_{i,j,k,l}N_{k,l} \\
        &= \begin{cases}
            \sum_{l} N_{l,i} - N_{j,i} \text{ if } i \neq j,\\
            0 \text{ if not.}
        \end{cases}
\end{align*}
It is just a sum over the column corresponding to the line avoiding the transpose element. \begin{align*}
    \cur B_8\tilde \cdot N = ( (N\odot J)\onevector \transpose \onevector - \transpose N)\odot J
\end{align*}

\begin{align*}
    (\cur B_9\tilde \cdot N)_{i,j} &= \sum_{k,l} (\cur B_9)_{i,j,k,l}N_{k,l} \\
        &= \begin{cases}
            \sum_{l \neq i} N_{j,l} - N_{j,i} \text{ if } i\neq j,\\
            0 \text{ if not.}
        \end{cases}
\end{align*}
It is just a sum over the line corresponding to the column avoiding the transpose element.
\begin{align*}
    \cur B_9\tilde \cdot N = (\onevector \transpose \onevector (N\odot J) - \transpose N)\odot J
\end{align*}

\begin{align*}
    (\cur B_{10}\tilde \cdot N)_{i,j} &= \sum_{k,l} (\cur B_{10})_{i,j,k,l}N_{k,l} \\
        &= \begin{cases}
            \sum_{l} N_{l,j} - N_{i,j} \text{ if } i \neq j,\\
            0 \text{ if not.}
        \end{cases}
\end{align*}
It is just a sum over the column avoiding the element. \begin{align*}
    \cur B_{10}\tilde \cdot N = ( (N\odot J)\onevector \transpose \onevector - N)\odot J
\end{align*}

\begin{align*}
    (\cur B_{11}\tilde \cdot N)_{i,j} &= \sum_{k,l} (\cur B_{11})_{i,j,k,l}N_{k,l} \\
        &= \begin{cases}
            \sum_{l} N_{l,l} - N_{i,i} - N_{j,j} \text{ if } i\neq j,\\
            0 \text{ if not.}
        \end{cases}
\end{align*}
It is just a sum over the diagonal avoiding the two corresponding diagonal elements.
\begin{align*}
    \cur B_{11}\tilde \cdot N = (\transpose  \onevector (N\odot \identite )  \onevector) J - \cur B_3\tilde \cdot N - \cur B_4\tilde \cdot N
\end{align*}

\begin{align*}
    (\cur B_{12}\tilde \cdot N)_{i,j} &= \sum_{k,l} (\cur B_{12})_{i,j,k,l}N_{k,l} \\
        &= \begin{cases}
            \sum_{l} N_{l,l} - N_{i,i} \text{ if } i = j,\\
            0 \text{ if not.}
        \end{cases}
\end{align*}
It is just a sum over the diagonal avoiding the corresponding diagonal element. \begin{align*}
    \cur B_{12}\tilde \cdot N = (\transpose  \onevector (N\odot \identite ) \onevector) J - (\onevector \transpose \onevector (N\odot \identite))\odot \identite
\end{align*}

\begin{align*}
    (\cur B_{13}\tilde \cdot N)_{i,j} &= \sum_{k,l} (\cur B_{13})_{i,j,k,l}N_{k,l} \\
        &= \begin{cases}
            N_{i,j} \text{ if } i\neq j,\\
            0 \text{ if not.}
        \end{cases}
\end{align*}
It selects the non-diagonal.
\begin{align*}
    \cur B_{13}\tilde \cdot N = N \odot J
\end{align*}

\begin{align*}
    (\cur B_{14}\tilde \cdot N)_{i,j} &= \sum_{k,l} (\cur B_{14})_{i,j,k,l}N_{k,l} \\
        &= \begin{cases}
            N_{j,i} \text{ if } i \neq j,\\
            0 \text{ if not.}
        \end{cases}
\end{align*}
It selects the transpose non-diagonal. \begin{align*}
    \cur B_{14}\tilde \cdot N = \transpose N \odot J
\end{align*}

\begin{align*}
    (\cur B_{15}\tilde \cdot N)_{i,j} &= \sum_{k,l} (\cur B_{15})_{i,j,k,l}N_{k,l} \\
        &= \begin{cases}
            &\sum_{k\neq l} N_{k,l} - \sum_{i\neq l} N_{i,l}  \\
            &- \sum_{i\neq l} N_{l,i} - \sum_{j\neq l} N_{j,l} \\
            &- \sum_{j\neq l} N_{l,j}- N_{i,j} - N_{j,i} \text{ if } i\neq j,\\
            &0 \text{ if not.}
        \end{cases}
\end{align*}
It is in fact a composition of other elements of the basis.
\begin{align*}
    \cur B_{15}\tilde \cdot N = &(\transpose \onevector(N\odot J)  \onevector ) J - \cur B_{7}\tilde \cdot N - \cur B_{8}\tilde \cdot N \\
    &- \cur B_{9}\tilde \cdot N - \cur B_{10}\tilde \cdot N + \cur B_{13}\tilde \cdot N + \cur B_{14}\tilde \cdot N
\end{align*}

 From all this, we can deduce the following grammar that generates 2-IGN:
 \begin{align*}
     M &\vers V_c \transpose \onevector \ | \ M\odot J \ | \ M \odot \identite \ | \ A \\
     V_c &\vers MV_c \ | \ \onevector 
 \end{align*}

 As one can see, there is less operation in the CFG than operators in the basis.

 \subsection{GNNs derived from different Grammars}

 This subsection is dedicated to a description of GNN derived from different grammar of \textbf{Q1} experiment ( section \ref{sec: 4}. 

Figure \ref{fig:exhaust_gnn} depicts a layer of a GNN derived from the exhaustive CFG $G_{\cur L_3}$. The resulting architecture inherits $\WL 3$ expressive power from theorem \ref{thm: exhaustivegl3}. In Figure \ref{fig:interm_gnn}, a GNN derived from i-$G_{\cur L_3}$, the CFG obtain during the reduction process of the framework of section \ref{sec: 3}, is described. Since $\transpose \ $ is missing in r-$G(\cur L_3$, Figure \ref{fig:gnn_with_transpose} describes a GNN derived from a grammar containing r-$G(\cur L_3)$ and $\transpose M$.
 \begin{figure}
     \centering
     \includegraphics[scale = .14]{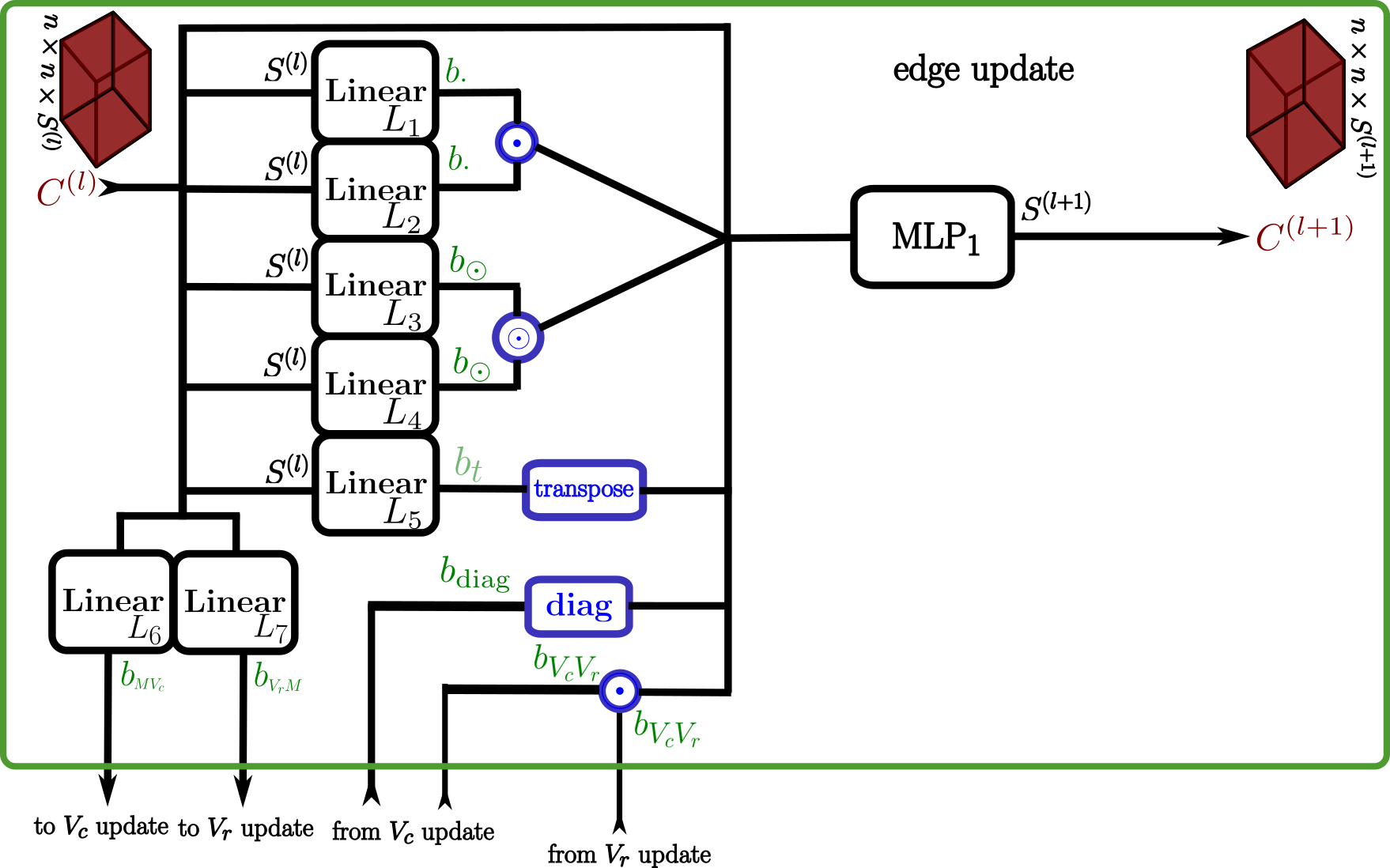}
     
     \includegraphics[scale = .14]{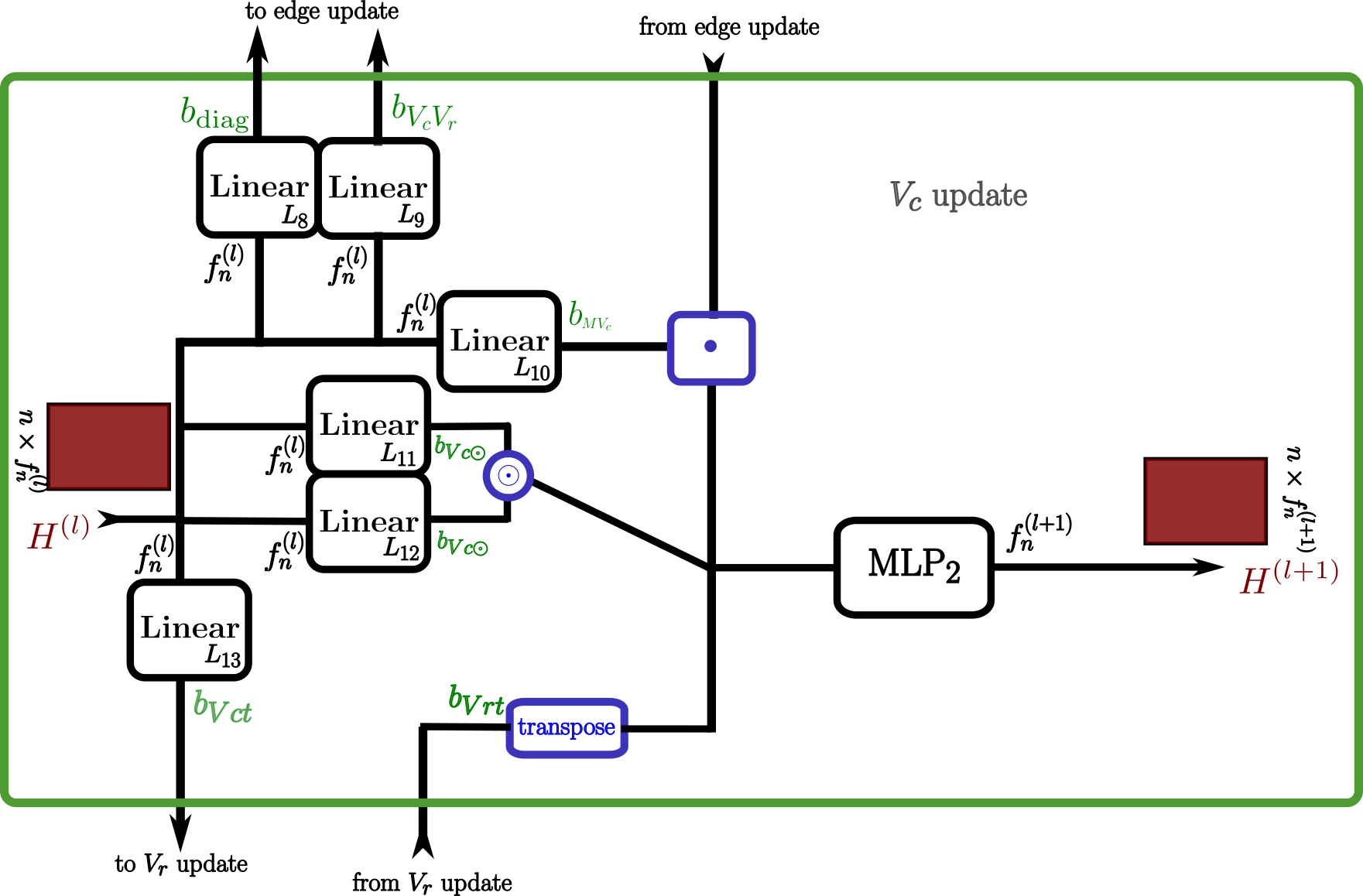}
     \includegraphics[scale = .14]{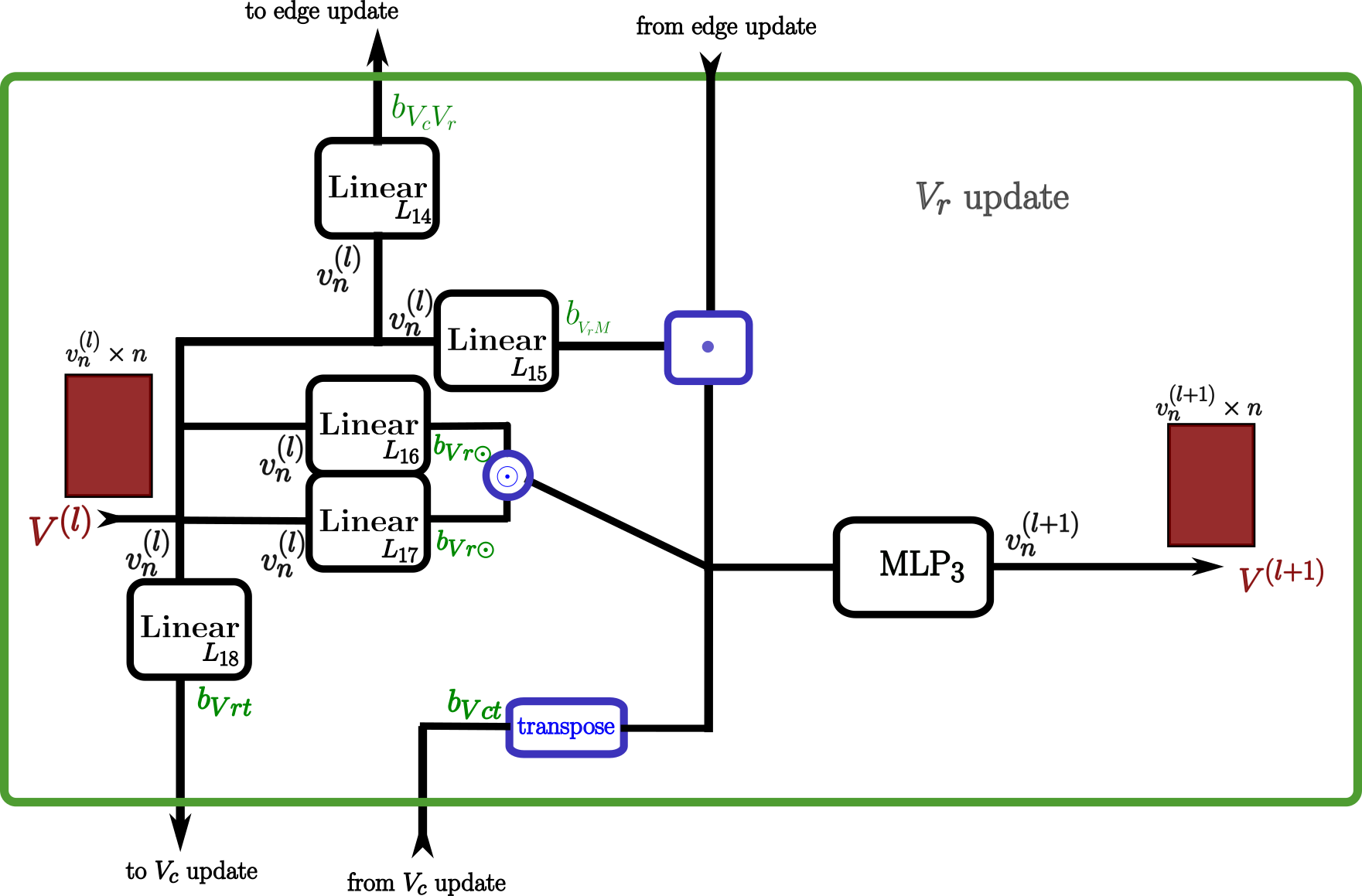}
     \caption{Layer of a GNN derived from $G_{\cur L_3}$.}
     \label{fig:exhaust_gnn}
 \end{figure}

 \begin{figure}
     \centering
     \includegraphics[scale = .14]{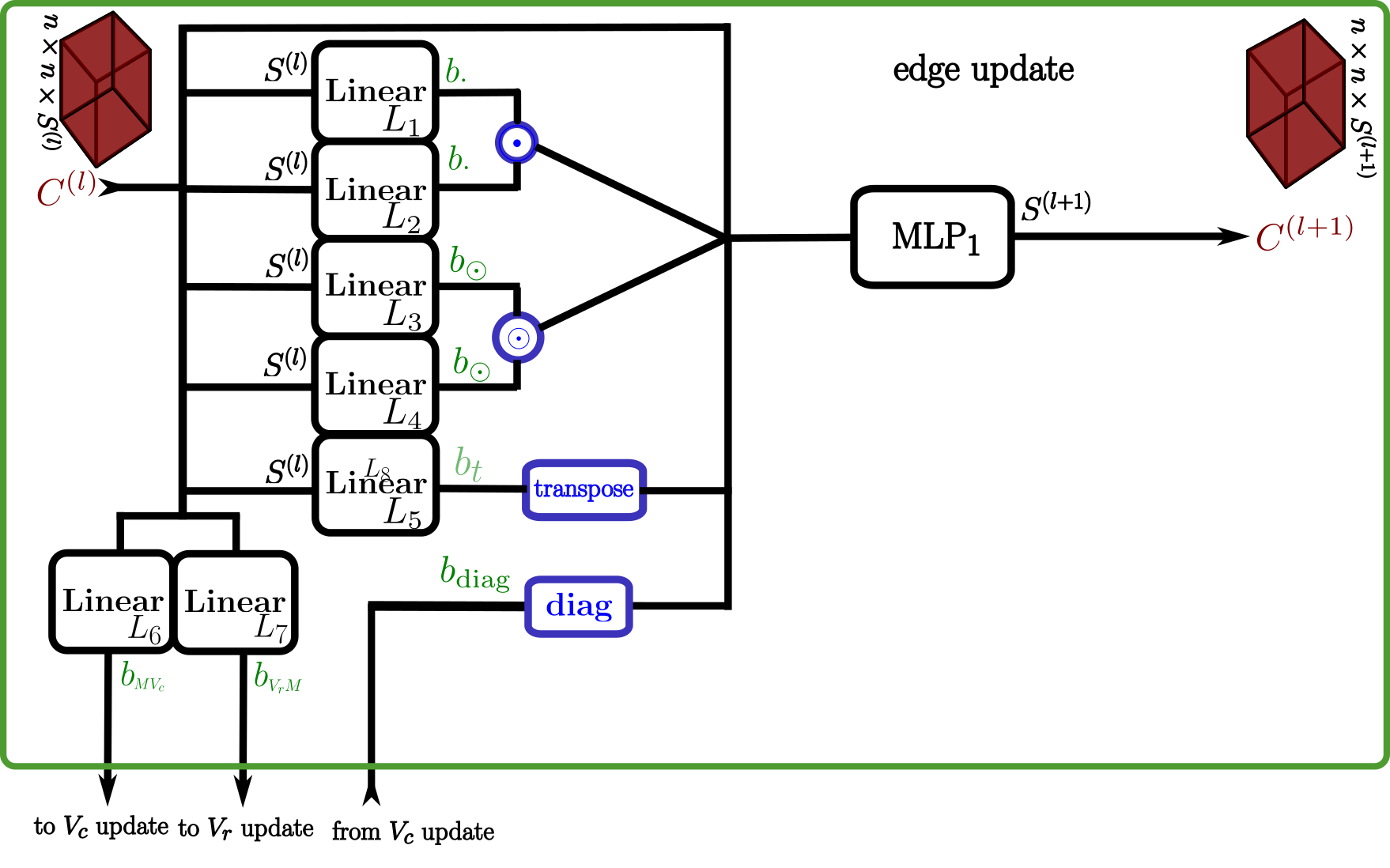}
     
     \includegraphics[scale = .14]{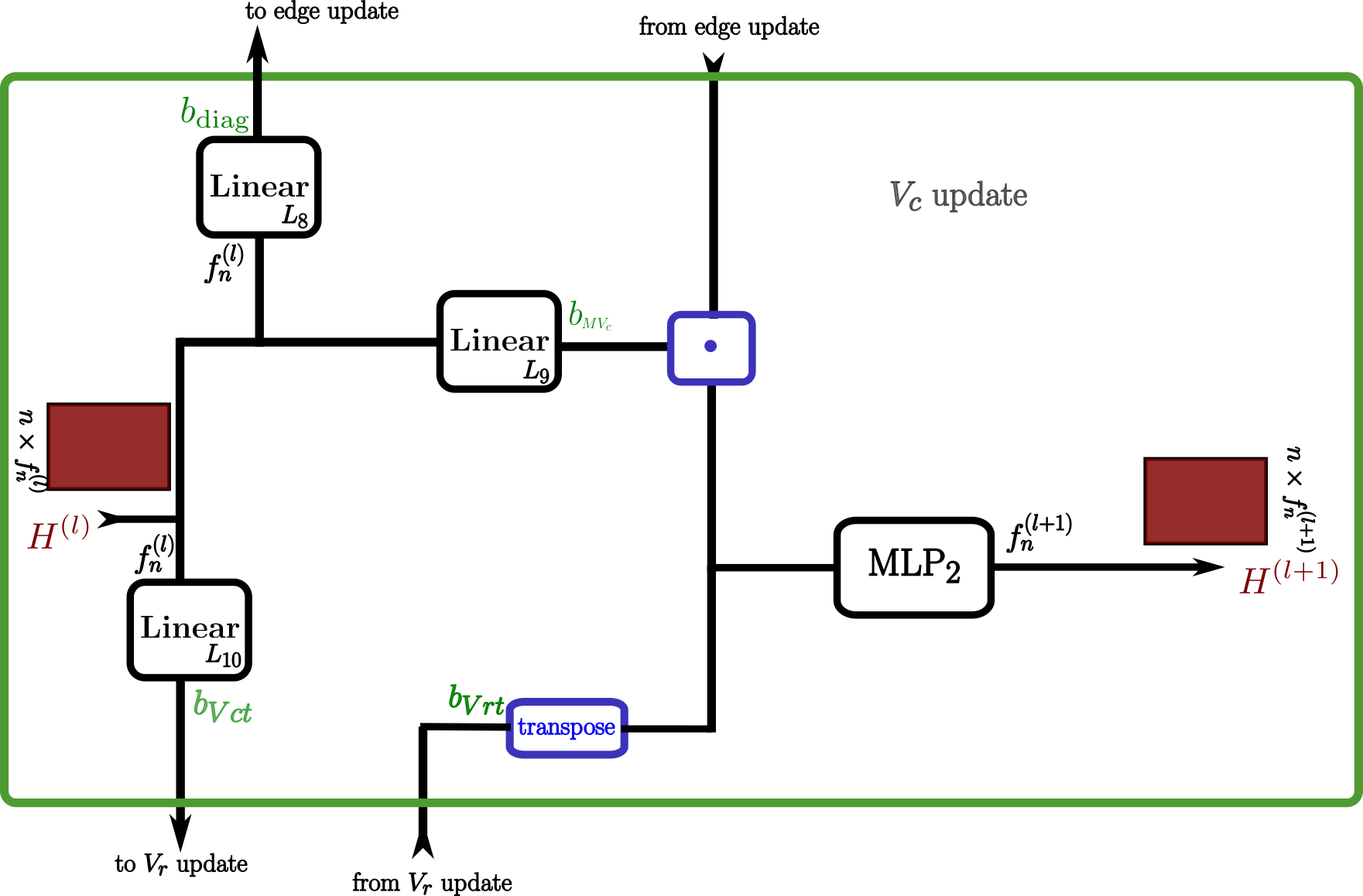}
     \includegraphics[scale = .14]{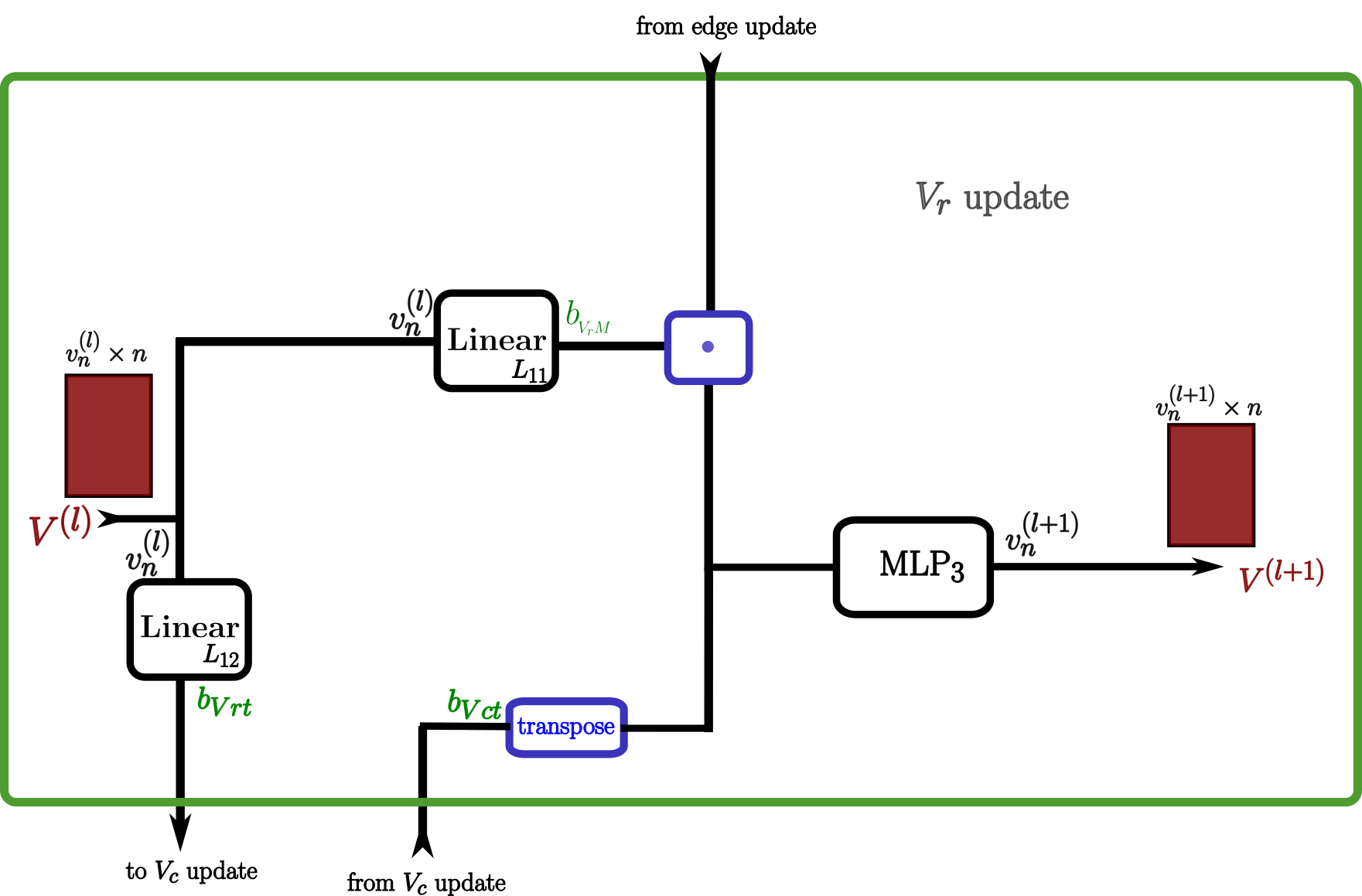}
     \caption{Layer of a GNN derived from i-$G_{\cur L_3}$.}
     \label{fig:interm_gnn}
 \end{figure}

 \begin{figure}
     \centering
     \includegraphics[width = \textwidth]{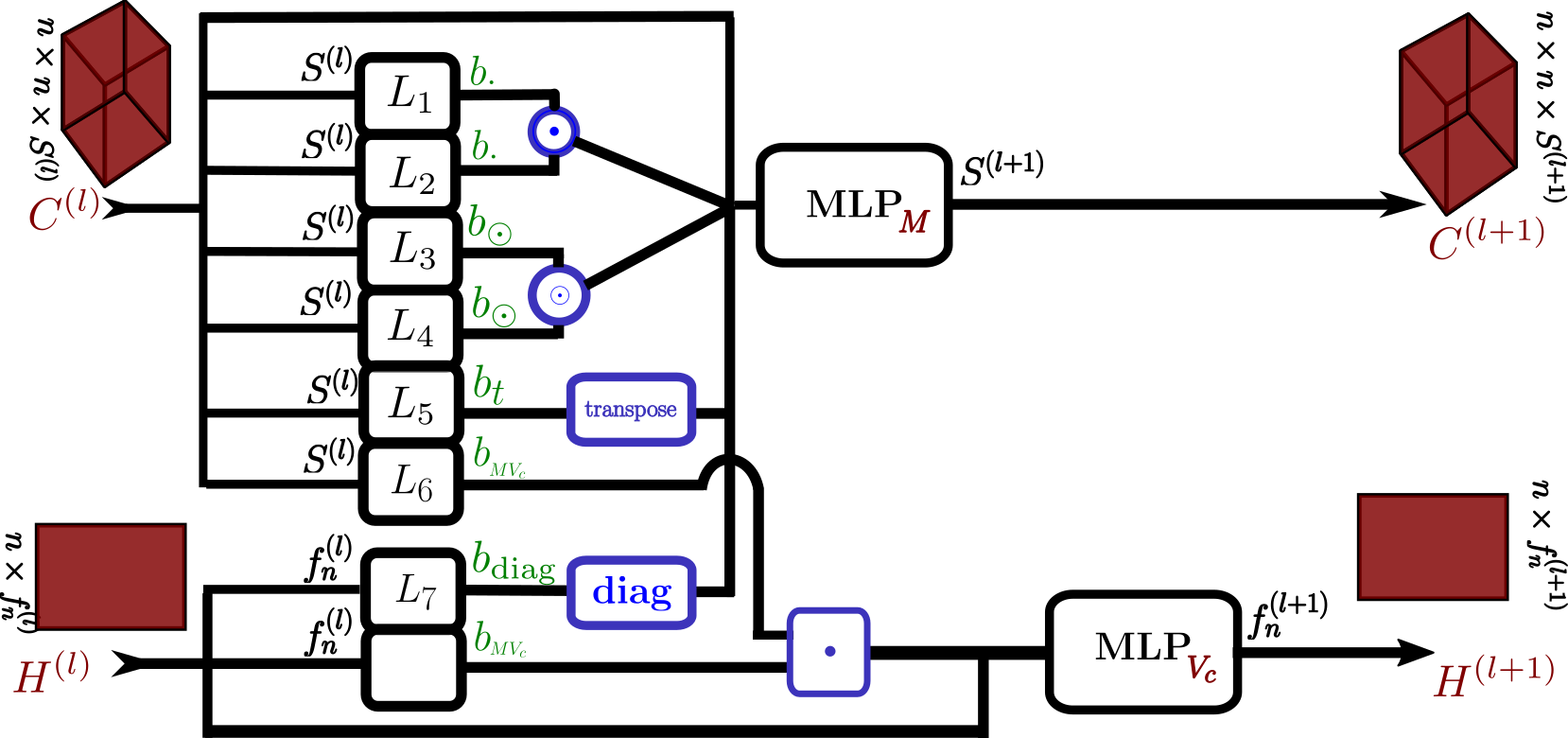}
     \caption{Layer of a GNN derived r-$G_{\cur L_3} \cup \{ \transpose M \}$.}
     \label{fig:gnn_with_transpose}
 \end{figure}

\section{Spectral response of $\ML{\cur L_3}$}\label{sec:spectral}
The graph Laplacian is the matrix $L = D - A $ (or $L = \identite - D^{-\frac 1 2}AD^{-\frac 1 2}$ for the normalised Laplacian) where $D$ is the diagonal degree matrix. Since $L$ is positive semidefinite, its eigendecomposition is $L = U\diag \lambda \transpose U$ with $U \dans \R^{n \fois n}$ orthogonal and  $\lambda \dans R^n_+$. 
By analogy with the convolution theorem, one can define graph filtering in the frequency domain by $\tilde x = U\diag{\Omega(\lambda)}\transpose U x$ where $\Omega$ is the filter applied in the spectral domain.

\begin{lem}\label{lem: laplacian from adjacency}
Given $A$ the adjacency matrix of a graph, $\ML{\cur L_3}$ can compute the graph Laplacian $L$ and the normalised Laplacian $L_n$ of this graph.
\end{lem}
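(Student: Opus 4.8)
The goal is to exhibit, for a graph with adjacency matrix $A$, a sentence in $\ML{\cur L_3}$ (more precisely a word producing a matrix, since the Laplacian is a matrix-valued object and $\cur L_3 \inclu$ MATLANG whose fragments include matrix-valued words) that equals $L = D - A$, and likewise one for $L_n$. The plan is to first reconstruct the degree vector, then the diagonal degree matrix $D$, then combine with $A$ using the operations available in $\cur L_3 = \{\cdot,\transpose\ ,\onevector,\mathrm{diag},\odot\}$ together with the ``free'' operations $+,\times,f$ which do not affect expressivity (as noted for $\cur L_1$ in \cite{Geerts} and used throughout the paper).

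First I would write the degree vector as $d = A\onevector$, which is a column vector produced by $\cur L_3$ from the terminal $A$. Then $D = \diag{A\onevector}$ is a square matrix in $\ML{\cur L_3}$, using the $\mathrm{diag}$ operation. Hence $L = D - A = \diag{A\onevector} - A$, which only additionally uses matrix addition (and scalar multiplication by $-1$); both are among the operations that leave the language's power unchanged, so $L \dans \ML{\cur L_3}$. This settles the unnormalised case.

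For the normalised Laplacian $L_n = \identite - D^{-1/2} A D^{-1/2}$, I would build the ingredients one at a time. The identity is $\identite = \diag{\onevector}$. The vector $D^{-1/2}\onevector$ of inverse square-root degrees is obtained by applying the element-wise custom scalar function $f(t) = t^{-1/2}$ to the vector $A\onevector$; call it $r = f(A\onevector)$. Then $\diag{r}$ is a square matrix, and $D^{-1/2} A D^{-1/2} = \diag{r}\, A\, \diag{r}$ via two matrix products. Therefore $L_n = \diag{\onevector} - \diag{f(A\onevector)}\, A\, \diag{f(A\onevector)}$, a word using only $\onevector,\mathrm{diag},\cdot,f$ and addition. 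Since $f$ and $+$ are in the allowed operation set of MATLANG and, per \cite{Geerts}, adding $f$ (and $+,\times$) does not change the expressive power relative to $\WL 3$, this word lies in $\ML{\cur L_3}$.

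The only genuinely delicate point is the handling of $f$ and of the degenerate case of zero-degree (isolated) nodes, where $t^{-1/2}$ is undefined: one either restricts to graphs without isolated vertices (as is standard when defining $L_n$) or adopts the usual convention that the corresponding entries are set to $0$, which can still be realized by an element-wise $f$. I expect this, rather than the algebra, to be where a careful proof spends its words; the matrix-product/$\mathrm{diag}$ reconstruction itself is immediate from Proposition~\ref{prop:ML3state} and the definition of $\ML{\cur L_3}$.
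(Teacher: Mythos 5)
Your proof is correct and follows essentially the same route as the paper's: build the degree matrix $D$ inside $\ML{\cur L_3}$, subtract $A$ for $L$, and invoke the fact from \cite{Geerts} that element-wise functions do not add expressive power to realise $D^{-1/2}$ and hence $L_n$ via two matrix products. The only cosmetic difference is that you obtain $D$ as $\diag{A\onevector}$ while the paper uses $A^2\odot\identite$ (both equal $D$ for a $0/1$ symmetric adjacency matrix), and you additionally flag the isolated-vertex degeneracy of $t^{-1/2}$, which the paper leaves implicit.
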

\begin{proof}
$\ML{\cur L_3}$ can produce $A^2\odot \identite$ which is equal to $D$. Thus it can compute $L = D - A$.
For the normalised Laplacian, since the point-wise application of a function does not improve the expressive power of $\ML{\cur L_3}$ (\cite{Geerts}), $D^{-\frac 1 2}$ is reachable by $\ML{\cur L_3}$. Thus, the normalised Laplacian $D^{-\frac 1 2}LD^{-\frac 1 2}$ can be computed.
\end{proof}

 As in \cite{balcilarexpresspower}, we define the spectral response $\phi \dans \R^n$ of $C\dans \R^{n\fois n}$ as
$
\phi(\lambda) = \mathrm{diagonal}(\transpose U C U)
$
where $\mathrm{diagonal}$ extracts the diagonal of a given square matrix.
Using spectral response, \cite{balcilarexpresspower} shows that most existing MPNNs act as low-pass filters while high-pass and band-pass filters are experimentally proved to be necessary to increase model expressive power.



\begin{thm}\label{thm : ml3 filter}
For any continuous filter $\Omega$ in the spectral domain of the normalised Laplacian, there exists a matrix in $\ML{\cur L_3}$ such that its spectral response approximate $\Omega$.
\end{thm}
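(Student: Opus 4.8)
The plan is to realize an arbitrary continuous spectral filter $\Omega$ as a polynomial in the normalised Laplacian $L_n$, and then to check that such polynomials live in $\ML{\cur L_3}$. By Lemma~\ref{lem: laplacian from adjacency}, $\ML{\cur L_3}$ can produce $L_n$ from $A$; since the matrix product $MM$ is an operation of $\cur L_3$, every power $L_n^k$ is in $\ML{\cur L_3}$, and since scalar multiplication and addition do not affect expressive power (they are built in / shown harmless in \cite{Geerts}), every polynomial $p(L_n)=\sum_{k=0}^d c_k L_n^k$ is realized by a matrix in $\ML{\cur L_3}$. The spectral response of $p(L_n)$ is easy to compute: writing $L_n = U\diag\lambda\transpose U$, we get $\transpose U\, p(L_n)\, U = p(\diag\lambda) = \diag{(p(\lambda_1),\dots,p(\lambda_n))}$, so $\phi(\lambda_i)=p(\lambda_i)$ exactly. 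Thus the spectral response of $p(L_n)$ is the pointwise evaluation of the scalar polynomial $p$ at the eigenvalues.

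The remaining step is the approximation argument. The eigenvalues of the normalised Laplacian all lie in the compact interval $[0,2]$. Given any continuous $\Omega:[0,2]\to\R$ and any $\varepsilon>0$, the Weierstrass approximation theorem yields a polynomial $p$ with $\sup_{\lambda\in[0,2]}|\Omega(\lambda)-p(\lambda)|<\varepsilon$. In particular $|p(\lambda_i)-\Omega(\lambda_i)|<\varepsilon$ for every eigenvalue $\lambda_i$ of $L_n$, so the matrix $C=p(L_n)\in\ML{\cur L_3}$ has spectral response $\phi$ with $\|\phi-\Omega(\lambda)\|_\infty<\varepsilon$. Since $\varepsilon$ is arbitrary, $\ML{\cur L_3}$ can approximate $\Omega$ to any desired precision, which is the claim. (If one wants a single matrix that works uniformly over graphs, note the same polynomial $p$ works for every graph, since $[0,2]$ contains the spectrum of every normalised Laplacian; only the degree of $p$ depends on $\varepsilon$.)

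I would present it essentially in that order: (i) $L_n\in\ML{\cur L_3}$ (cite Lemma~\ref{lem: laplacian from adjacency}); (ii) closure of $\ML{\cur L_3}$ under polynomials in a produced matrix, via $MM$ and the harmlessness of $+,\times,f$; (iii) the spectral response of $p(L_n)$ equals $p$ evaluated on the spectrum; (iv) Weierstrass on $[0,2]$ to finish. The only real subtlety — and the step I would be most careful about — is item (ii): one must be sure that the grammar/language genuinely permits forming $\sum_k c_k L_n^k$, i.e. that the linear-combination and scalar-multiplication operations invoked are exactly the operations whose addition to $\cur L_3$ is shown not to change expressiveness in \cite{Geerts}; granting that, the rest is routine. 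A secondary point worth a sentence is that the definition of $\phi$ as $\mathrm{diagonal}(\transpose U C U)$ is basis-dependent only up to the eigenvalue multiplicities, but since we only need the values $\phi(\lambda_i)=\Omega(\lambda_i)$ this causes no difficulty.
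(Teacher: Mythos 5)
Your proposal is correct and follows essentially the same route as the paper's proof: compute the (normalised) Laplacian via Lemma~\ref{lem: laplacian from adjacency}, form its powers with the matrix product, use linearity (the harmlessness of $+$, $\times$, $f$) to obtain arbitrary polynomials whose spectral response is the polynomial evaluated on the spectrum, and conclude by Stone--Weierstrass on the compact interval $\intervalleff 0 2$. Your explicit caution about step (ii) is a reasonable refinement of the paper's terser appeal to the linearity of the spectral-response map, but it does not change the argument.
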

\begin{proof}
The spectrum of the normalised Laplacian is included in $\intervalleff 0 2$, which is compact. Thanks to Stone-Weierstra\ss \ theorem, any continuous function can be approximated by a polynomial function. We just have to ensure the existence of a matrix in $\ML{\cur L_3}$ such that its spectral response is a polynomial function.

For $k \dans \N$, the spectral response of $L^k$ is $\lambda ^k$ since we have
\begin{align*}
\transpose U L^k U &= \transpose U (U \diag \lambda \transpose U)^k U \\
    &= \transpose U U \diag \lambda ^k  \transpose U U= \diag \lambda ^k
\end{align*}
From Lemma \ref{lem: laplacian from adjacency}, $\ML{\cur L_3}$ can compute $L$, and thus it can compute $L^k$ for any $k\dans \N$.
Since $\ML{\cur L_3}$ can produce all the matrices with a monome spectral response and since the function that gives the spectral response to a given matrix is linear, $\ML{\cur L_3}$ can produce any matrices with a polynomial spectral response.
\end{proof}

This section shows that a $\WL 3$ GNN should be able to approximate any type of filter.

\section{Experiments}\label{subsec:setting}

\subsection{Experimental setting}
In the experiments, all  the linear blocks of a layer are set at the same width  $S^{(l)} = b_\otimes^{(l)} = b_\odot^{(l)}=b_{\mathrm{diag}}^{(l)}$. This means that MLP$_M^{(l)}$ takes as input a third order tensor of dimensions $n\times n \times 4S^{(l)}$ and MLP$_{V_c}^{(l)}$ takes as input a matrix of dimensions $n \times 2S^{(l)}$.  At each layer, the MLP depth is always 2 and the intermediate layer doubled the input dimension. 




\subsection{QM9}
For this experiment, there are 4 edge attributes and  11 node features. We use 3 layers with $S^{(l)} = f_n^{(l)} = 64$ when learning one target at a time and $S^{(l)} = f_n^{(l)} = 32$ in the other experiment for $l \dans \{1,2,3\}$. The vector readout function is a sum over the components of $H^{(3)}$ and the matrix readout function is a sum over the components of the diagonal and the off-diagonal parts of $\cur C^{(3)}$. Finally, 3 fully connected layers, with respective dimension($512/256/(1 \text{ or } 12)$) are applied before using an absolute error loss. 
Complete results on this dataset can be found in Table \ref{tab:QM9completeone}.
\begin{table*}[h]
    \centering
    \caption{Results on QM9 dataset predicting each target at a time. The metric is MAE, the lower, the better.}
    \footnotesize{
    \csvautobooktabular{data/EXPQM9GMN.txt}
    }
    \label{tab:QM9completeone}
\end{table*}

\subsection{TUD}
The parameter setting for each of the 6 experiments related to this dataset can be found in Table \ref{tab:parmtud}. Complete results on this dataset are given in Table \ref{tab:TUDcomplete}.
\begin{table*}[h]
    \centering
    \caption{G$^2$N$^2$ parameters detail for each dataset in our experiments on TU}
    \footnotesize{\csvautobooktabular{data/paramTUD.txt}}
    \label{tab:parmtud}
\end{table*}
\begin{table*}[h]
    \centering
    \caption{Results on TUD dataset. The metric is accuracy, the higher, the better.}
    \footnotesize{
    \csvautobooktabular{data/TUD.txt}
    }
    \label{tab:TUDcomplete}
\end{table*}

\subsection{Spectral dataset}

This dataset is composed of three 2D grids of size 30x30, for respectively training, validation, and testing. We use 3 layers of G$^2$N$^2$ with $S^{(l)}=32$ and  $f_n^{(l)}=32$ for $l \dans \{1,2,3\}$. Our readout function is the identity over the last node embedding and a sum over the line of the last edge embedding. We finally apply two fully connected layers on the output of the readout function and then use Mean Square Error (MSE) loss to compare the output to the ground truth.

\end{document}